\documentclass{article}

\usepackage{arxiv}

\usepackage[utf8]{inputenc} 
\usepackage[T1]{fontenc}    
\usepackage{hyperref}       
\usepackage{url}            
\usepackage{booktabs}       
\usepackage{amsfonts}       
\usepackage{nicefrac}       
\usepackage{microtype}      
\usepackage{xcolor}         
\usepackage{natbib}
\usepackage[toc,page,header]{appendix}
\usepackage{minitoc}
\usepackage{multirow}
\newcommand*\samethanks[1][\value{footnote}]{\footnotemark[#1]}
\usepackage{amsthm}
\usepackage{amssymb}
\usepackage{amsmath}
\usepackage{mathtools}
\usepackage{wrapfig}
\usepackage{caption}
\usepackage{subcaption}
\usepackage{algorithm}
\usepackage{algorithmic}
\usepackage[capitalize]{cleveref}
\usepackage{placeins}
\usepackage{enumitem}

\theoremstyle{definition}

\newtheorem{proposition}{Proposition}

\newcommand{\nocontentsline}[3]{}
\newcommand{\tocless}[2]{\bgroup\let\addcontentsline=\nocontentsline#1{#2}\egroup}

\title{Individualized Dosing Dynamics via Neural Eigen Decomposition}

%

\author{%
    Stav Belogolovsky\thanks{Department of Electrical and Computer Engineering, Technion---Israel Institute of Technology, Haifa, Israel 3200003} \\
    \texttt{stav.belo@gmail.com}
    \And
    Ido Greenberg\samethanks \\
    \texttt{gido@campus.technion.ac.il} \\
    \AND
    Danny Eytan\thanks{Department of Physiology and Biophysics, Faculty of Medicine, Technion---Israel Institute of Technology, Haifa, Israel 3200003} \\
    \texttt{biliary.colic@gmail.com}
    \And
    Shie Mannor\samethanks[1] \thanks{Nvidia Research} \\
    \texttt{shie@ee.technion.ac.il}
}

\date{}

\begin{document}

\maketitle

\begin{abstract}
Dosing models often use differential equations to model biological dynamics.
Neural differential equations in particular can learn to predict the derivative of a process, which permits predictions at irregular points of time.
However, this temporal flexibility often comes with a high sensitivity to noise, whereas medical problems often present high noise and limited data.
Moreover, medical dosing models must generalize reliably over individual patients and changing treatment policies.
To address these challenges, we introduce the Neural Eigen Stochastic Differential Equation algorithm (\textbf{NESDE}).
NESDE provides individualized modeling (using a hypernetwork over patient-level parameters); generalization to new treatment policies (using decoupled control); tunable expressiveness according to the noise level (using piecewise linearity); and fast, continuous, closed-form prediction (using spectral representation).
We demonstrate the robustness of NESDE in both synthetic and real medical problems, and use the learned dynamics to publish simulated medical gym environments.

\end{abstract}

\doparttoc 
\faketableofcontents 

\section{Introduction}
\label{sec:intro}

Sequential forecasting in irregular points of time is required in many real-world problems, such as modeling dosing dynamics of various medicines (pharmacodynamics).
Consider a patient whose physiological or biochemical state requires continuous monitoring, while blood tests are only available with a limited frequency.
Pharmacodynamics models often rely on an ordinary differential equation models (ODE) for forecasting.
Additional expressiveness can be obtained via customized learned models, such as neural-ODE, which learns to predict the derivative of the process \citep{chen2018neural,liu2019neural}.
By predicting the \textit{derivative}, neural-ODE can make irregular predictions at flexible time-steps, unlike regular models that operate in constant time-steps (e.g., Kalman filter, \citet{KF} and recurrent neural networks, \citet{RNN}).

However, real-world forecasting remains a challenge for several reasons.
First, the variation between patients often requires personalized modeling.
Second, neural-ODE methods are often data-hungry: they aggregate numerous derivatives provided by a non-linear neural network, which is often sensitive to noise.
Training over a large dataset may stabilize the predictions, but data is often limited.
Third, most neural-ODE methods only provide a point-estimate, while uncertainty estimation is often critical in medical settings.
Fourth, for every single prediction, the neural-ODE runs a numeric ODE solver, along with multiple neural network calculations of the derivative. This computational overhead in inference may limit latency-sensitive applications.

A fifth challenge comes from control.
In the framework of retrospective forecasting
, a control signal (drug dosage) is often considered part of the observation \citep{gru_ode}.
However, this approach raises difficulties if the control is observed at different times or more frequently than other observations.
If the control is part of the model output, it may also bias the train loss away from the true objective.
Finally, by treating control and observations together, the patterns learned by the model may overfit the control policy used in the data -- and generalize poorly to new policies.

Generalization to out-of-distribution control policies is essential when the predictive model supports decision-making, as the control policy may be affected by the model.
Such decision-making is an important use-case of sequential prediction: model-based reinforcement learning and control problems require a reliable model \citep{model_based_rl,model_based_rl_bio}, in particular in risk-sensitive control \citep{rl_healthcare,cesor,greenberg2021detecting}.

\begin{table}
\vspace{-0.4cm}
\centering
\caption{A summary of the features of NESDE. In Sections \ref{sec:synthetic_experiments},\ref{sec:real_experiments}, NESDE is tested on synthetic and real medical data. In addition, each component of NESDE is studied experimentally as specified.}
\label{tab:NESDE}
\begin{tabular}{@{}lll@{}}
\toprule
\textbf{Challenge}        & \textbf{Solution}                                                                                          & \textbf{Dedicated experiments} \\ \midrule
\vspace{0.16cm}
Individualized modeling   & \begin{tabular}[c]{@{}l@{}}Hyper-network with\\ high-level features input \\\end{tabular}            & \begin{tabular}[c]{@{}l@{}}Hyper-network ablation \\ (\cref{app:ablation_hn}) \end{tabular}      \\
\vspace{0.16cm}
Sample efficiency         & \begin{tabular}[c]{@{}l@{}}Regularized dynamics:\\ piecewise-linear with\\ complex eigenvalues\end{tabular} & \begin{tabular}[c]{@{}l@{}} Varying train size \\ (\cref{sec:synthetic_experiments}, \cref{sec:gru_experiments}); \\ varying sparsity (\cref{sec:sparsity}) \end{tabular} \\
\vspace{0.16cm}
Uncertainty estimation    & \begin{tabular}[c]{@{}l@{}}Probabilistic Kalman\\ filtering\end{tabular}                                   & NLL evaluation (Sections \ref{sec:synthetic_experiments},\ref{sec:real_experiments}) \\
\vspace{0.16cm}
Fast continuous inference & \begin{tabular}[c]{@{}l@{}}Spectral representation\\ with closed-form solution\end{tabular}                & \begin{tabular}[c]{@{}l@{}} Time flexibility (\cref{app:regular_lstm}); \\ interpretability (\cref{app:interpretability}) \end{tabular} \\
Control generalization    & \begin{tabular}[c]{@{}l@{}}Decoupling control\\ from other inputs\end{tabular}                             & \begin{tabular}[c]{@{}l@{}} Out of distribution control \\ (\cref{sec:synthetic_experiments}, \cref{sec:oracle_ood}) \end{tabular} \\ \bottomrule
\end{tabular}
\vspace{-0.3cm}
\end{table}

\cref{sec:NESDE} introduces the Neural Eigen-SDE algorithm (\textbf{NESDE}) for continuous forecasting, which is designed to address the challenges listed above. NESDE uses a hypernetwork to provide an individualized stochastic differential equation (SDE), regularized to be piecewise-linear and represented in spectral form.
The SDE derives a probabilistic model similar to Kalman filtering, which provides uncertainty estimation. Finally, the SDE decouples the control signal from other observations, to discourage the model from learning control patterns that may be violated under out-of-distribution control policies. \cref{tab:NESDE} summarizes all these features.

\cref{sec:synthetic_experiments} tests NESDE against both neural-ODE methods and recurrent neural networks.
NESDE demonstrates robustness to both noise (by learning from little data) and out-of-distribution control policies. In \cref{app:interpretability}, the SDE model of NESDE is shown to enable potential domain knowledge and provide interpretability -- via the predicted SDE eigenvalues. \cref{app:regular_lstm} demonstrates the disadvantage of discrete methods in continuous forecasting.

In \cref{sec:real_experiments}, NESDE demonstrates high prediction accuracy in two medical forecasting problems with noisy and irregular real-world data: (1) blood coagulation prediction given Heparin dosage, and (2) prediction of the Vancomycin (antibiotics) levels for patients who received it.


\textbf{Contribution:}
\begin{itemize}[leftmargin=20pt]
    \item We characterize the main challenges in continuous forecasting for medication dosing control.  
    \item We design the novel Neural Eigen-SDE algorithm (NESDE), which addresses the challenges as summarized in \cref{tab:NESDE} and demonstrated empirically in Sections~\ref{sec:synthetic_experiments} and \ref{sec:real_experiments}.
    \item We use NESDE to improve modeling accuracy in two medication dosing processes. Based on the learned models, we simulate \href{https://github.com/NESDE/NESDE/tree/main/Simulator_Suite}{\underline{{gym environments}}} for future research of healthcare control. 
\end{itemize}
\subsection{Related Work}
\label{sec:related_work}
\textbf{Classic filtering:}
Classic models for sequential prediction in time-series include ARIMA models~\citep{ARMA} and the Kalman filter (KF)~\citep{KF}. The KF provides probabilistic distributions and in particular uncertainty estimation. While the classic KF is limited to linear dynamics, many non-linear extensions have been suggested~\citep{deep_kf,pose_estimation,KalmanNet,okf}. However, such models are typically limited to a constant prediction horizon (time-step). Longer-horizon predictions are often made by applying the model recursively~\citep{recursive_prediction,multistep_forecasting}, 
This poses a significant challenge to many optimization methods~\citep{vanishing_gradients}, as also demonstrated in \cref{app:regular_lstm}.

Limited types of irregularity can also be handled by KF with intermittent observations~\citep{intermittent_KF_stability,intermittent_KF} or periodical time-steps~\citep{lifted_KF}.  

\textbf{Recurrent neural networks:}
Sequential prediction is often addressed via neural network models, relying on architectures such as RNN~\citep{RNN}, LSTM~\citep{LSTM} and transformers~\citep{transformer}. LSTM, for example, is a key component in many SOTA algorithms for non-linear sequential prediction \citep{process_prediction_review}. LSTM can be extended to a filtering framework to alternately making predictions and processing observations, and even to provide uncertainty estimation \citep{ManeuveringTargetTracking2}. However, these models are typically limited to constant time-steps, and thus suffer from the limitations discussed above.

\textbf{Neural-ODE models:}
Parameterized ODE models can be optimized by propagating the gradients of a loss function through an ODE solver~\citep{chen2018neural,liu2019neural,latent_odes}. By predicting the process \textit{derivative} and using an ODE solver in real-time, these methods can choose the effective time-steps flexibly. Uncertainty estimation can be added via process variance prediction~\citep{gru_ode}. However, since neural-ODE methods learn a non-linear dynamics model, the ODE solver operates numerically and recursively on top of multiple neural network calculations. This affects running time, training difficulty and data efficiency as discussed above.
While neural-ODE models have been studied for medical applications with irregular data \citep{lu2021neural}, simpler models are commonly preferred in practice. For example, the effects of Heparin on blood coagulation is usually modeled by either using discrete models~\citep{nemati2016optimal} or manually based on domain knowledge~\citep{Delavenne2017}.

Our method uses SDE with piecewise linear dynamics (note this is \textit{different} from a piecewise linear process). The linear dynamics per time interval permit efficient and continuous closed-form forecasting of both mean and covariance. \citet{schirmer2022modeling} also rely on a linear ODE model, but only support operators with real-valued eigenvalues (which limits the modeling of periodic processes), and do not separate control signal from observations (which limits generalization to out-of-distribution control). Our piecewise linear architecture, tested below against alternative methods including \citet{gru_ode} and \citet{schirmer2022modeling}, is demonstrated to be more robust to noisy, sparse or small datasets, even under out-of-distribution control policies.

\section{Preliminaries: Linear SDE}
\label{sec:preliminaries}
We consider a particular case of the general linear Stochastic Differential Equation (SDE):
\begin{equation}
\label{eq:lin_sde}
    dX(t) = \left[ A\cdot X(t) + \tilde{u}(t) \right] + dW(t)
\end{equation}
where $X:\mathbb{R}\rightarrow \mathbb{R}^n$ is a time-dependent state; $A\in\mathbb{R}^{n\times n}$ is a fixed dynamics operator; $\tilde{u}:\mathbb{R}\rightarrow \mathbb{R}^n$ is the control signal; and $dW:\mathbb{R}\rightarrow \mathbb{R}^n$ is a Brownian motion vector with covariance $Q\in \mathbb{R}^{n\times n}$.

General SDEs can be solved numerically using the first-order approximation $\Delta X(t) \approx \Delta t \cdot dX(t)$, or using more delicate approximations~\citep{wang1998runge}. The linear SDE, however, and in particular \cref{eq:lin_sde}, can be solved analytically~\citep{linear_sde_solution}:
\begin{align}
\label{eq:sde_sol}
    X(t) = \Phi(t) \left( \Phi(t_0)^{-1}X(t_0) + \vphantom{\int_{t_0}^t} \right.
    \left. \int_{t_0}^t \Phi(\tau)^{-1}\tilde{u}(\tau)d\tau + \int_{t_0}^t \Phi(\tau)^{-1}dW(\tau) \right)
\end{align}
where $X(t_0)$ is an initial condition, and $\Phi(t)$ is the eigenfunction of the system. More specifically, if $V$ is the matrix whose columns $\{v_i\}_{i=1}^n$ are the eigenvectors of $A$, and $\Lambda$ is the diagonal matrix whose diagonal contains the corresponding eigenvalues $\lambda=\{\lambda_i\}_{i=1}^n$, then 
\begin{align}
\label{eq:eigenfunction}
    \Phi(t) = Ve^{\Lambda t} = 
    \begin{pmatrix} | & | & | & | & | \\ v_1\cdot e^{\lambda_1 t} & \dots & v_i\cdot e^{\lambda_i t} & \dots & v_n\cdot e^{\lambda_n t} \\ | & | & | & | & | \end{pmatrix}
\end{align}

If the initial condition is given as $X(t_0)\sim N(\mu_0,\Sigma_0)$, \cref{eq:sde_sol} becomes
\begin{align}
\label{eq:normal_sol}
\begin{split}
    X(t) & \sim N\left(\mu(t),\Sigma(t)\right) \\
    \mu(t) & = \Phi(t) \left( \Phi(t_0)^{-1}\mu_0 + \int_{t_0}^t \Phi(\tau)^{-1}\tilde{u}(\tau)d\tau \right), \;
    \Sigma(t) = \Phi(t) \Sigma^\prime(t) \Phi(t)^\top
\end{split}
\end{align}
where
    $\Sigma^\prime(t) = \Phi(t_0)^{-1} \Sigma_0 (\Phi(t_0)^{-1})^\top +
     \int_{t_0}^t \Phi(\tau)^{-1}Q (\Phi(\tau)^{-1})^\top d\tau$ .

Note that if $\forall i: \lambda_i<0$ and $\tilde{u}\equiv0$, we have $\mu(t) \xrightarrow{t \to \infty} 0$ (stable system).
In addition, if $\lambda$ is complex, \cref{eq:normal_sol} may produce a complex solution; \cref{sec:complex_eigens} explains how to use a careful parameterization to only calculate the real solutions.


\section{Problem Setup: Sparsely-Observable SDE}
\label{sec:problem_setup}

\begin{wrapfigure}[13]{r}{0.5\linewidth}
\vspace{-0.4cm}
\centering
\begin{subfigure}{.49\linewidth}
  \centering
  \includegraphics[width=1\linewidth]{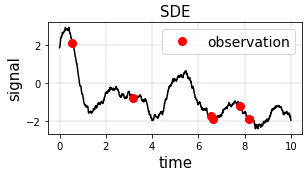}
  \caption{}
\end{subfigure}
\begin{subfigure}{.49\linewidth}
  \centering
  \includegraphics[width=1\linewidth]{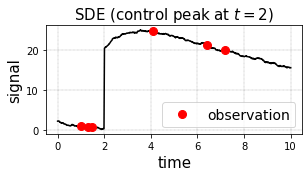}
  \caption{}
\end{subfigure}
\vspace{-0.4cm}
\caption{\footnotesize Samples of sparsely observed SDEs: the Brownian noise and the sparse observations pose a major challenge for learning the underlying SDE dynamics. Efficient learning from external trajectories data is required, as the current trajectory often does not contain sufficient observations.}
\label{fig:SDE_samples}
\vspace{-0.2cm}
\end{wrapfigure}

We focus on online sequential prediction of a process $Y(t)\in\mathbb{R}^m$. To predict $Y(t_0)$ at a certain $t_0$, we can use noisy observations $\hat{Y}(t)$ (at given times $t<t_0$), as well as a control signal $u(t)\in\mathbb{R}^k$ $(\forall t<t_0)$; offline data of $Y$ and $u$ from other sequences; and one sample of contextual information $C\in \mathbb{R}^{d_c}$ per sequence (capturing properties of the whole sequence). \textbf{The dynamics of $Y$ are unknown and may vary between sequences}. For example, sequences may represent different patients, each with its own dynamics; $C$ may represent patient information; and the objective is ``zero-shot" learning upon arrival of a sequence of any new patient. In addition, \textbf{the observations within a sequence are both irregular and sparse}: they are received at arbitrary points of time, and are sparse in comparison to the required prediction frequency (i.e., continuous forecasting, as illustrated in \cref{fig:SDE_samples}).

To model the problem, we assume the observations $Y(t)$ to originate from an unobservable latent process $X(t)\in\mathbb{R}^n$ (where $n>m$ is a hyperparameter). More specifically: 
\begin{align}
\label{eq:model}
    dX(t) = F_C\big(X(t),u(t)\big), \;
    Y(t) = X(t)_{1:m}, \; 
    \hat{Y}(t) = Y(t) + \nu_C(t)
\end{align}
where $F_C$ is a stochastic dynamics operator (which may depend on the context $C$); $Y$ is simply the first $m$ coordinates of $X$; $\hat{Y}$ is the corresponding observation; and $\nu_C(t)$ is its i.i.d Gaussian noise with zero-mean and (unknown) covariance $R_C\in\mathbb{R}^{m \times m}$ (which may also depend on $C$). Our goal is to predict $Y$, where the dynamics $F_C$ are unknown and the latent subspace of $X$ is unobservable. In cases where data of $Y$ is not available, we measure our prediction accuracy against $\hat{Y}$. The control $u(t)$ is modeled separately from $\hat{Y}$, is not part of the prediction objective, and does not depend on $X$.

\section{Neural Eigen-SDE Algorithm}
\label{sec:NESDE}

\textbf{Model:}
In this section, we introduce the Neural Eigen-SDE algorithm (NESDE, shown in \cref{alg:NESDE} and \cref{fig:NESDE_mod}). NESDE predicts the signal $Y(t)$ of \cref{eq:model} continuously at any required point of time $t$. It relies on a piecewise linear approximation which reduces \cref{eq:model} into \cref{eq:lin_sde}:
\begin{align}
\label{eq:NESDE_model}
    \forall t\in \mathcal{I}_i: \;
    dX(t) = \left[ A_i\cdot (X(t)-\alpha) + B\cdot u(t) \right] + dW(t)
\end{align}
\begin{wrapfigure}{r}{0.6\linewidth}
    \vspace{-0.3cm}
    \centering
    \includegraphics[width=1.0\linewidth]{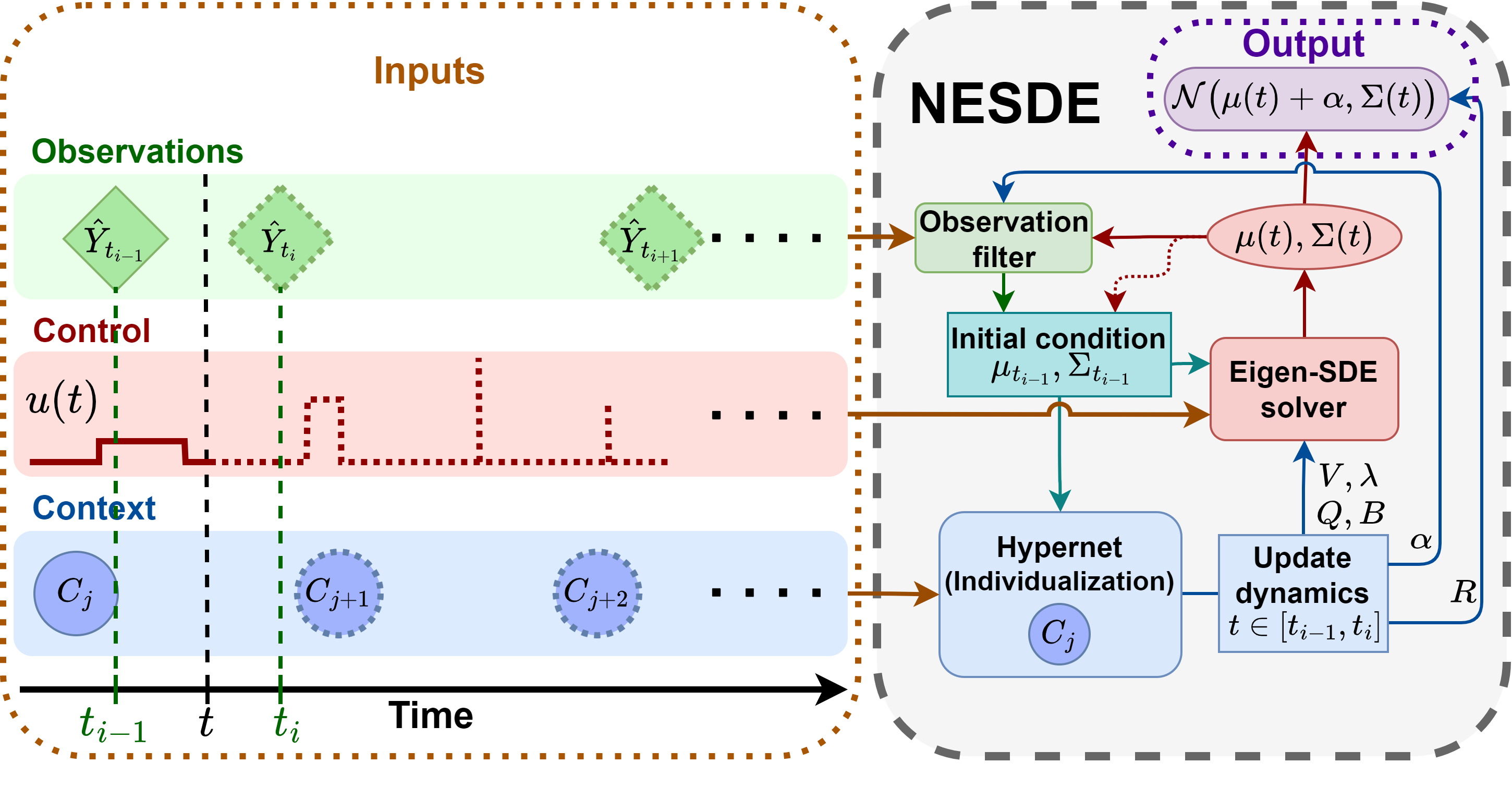}
    \vspace{-0.6cm}
    \caption{\footnotesize NESDE algorithm. Hypernet uses the context and the estimated state to determine the SDE parameters; Eigen-SDE solver uses them to make predictions for the next time-interval; the filter updates the state upon arrival of a new observation, which initiates a new interval. For more frequent updates of the dynamics, the initial condition becomes the last prediction (dotted arrow). 
    }
    \label{fig:NESDE_mod}
    \vspace{-0.4cm}
\end{wrapfigure}
where $\mathcal{I}_i=\left(t_i,t_{i+1}\right)$ is a time interval, $dW$ is a Brownian noise with covariance matrix $Q_i$, and $A_i\in\mathbb{R}^{n\times n},B\in\mathbb{R}^{n\times k},Q_i\in\mathbb{R}^{n\times n},\alpha\in\mathbb{R}^n$ form the linear dynamics model corresponding to the interval $\mathcal{I}_i$. In terms of \cref{eq:lin_sde}, we substitute $A\coloneqq A_i$ and $\tilde{u}\coloneqq Bu-A_i\alpha$. Note that if $A_i$ is a stable system and $u\equiv 0$, the asymptotic state is $\mu(t) \xrightarrow{t \to \infty} \alpha$. To solve \cref{eq:NESDE_model} within every $\mathcal{I}_i$, NESDE has to learn the parameters $\{A_i,Q_i\}_i,\alpha,B$.

The end of $\mathcal{I}_{i-1}$ typically represents one of two events: either an update of the dynamics $A$ (allowing the piecewise linear dynamics), or the arrival of a new observation. A new observation at time $t_i$ triggers an update of $X(t_i)$ according to the conditional distribution $X(t_i)|\hat{Y}(t_i)$ (this is a particular case of Kalman filtering, as shown in \cref{sec:observation_filter}). Then, the prediction continues for $\mathcal{I}_{i}$ according to \cref{eq:NESDE_model}. Note that once $X(t_0)$ is initialized to have a Normal distribution, it remains Normally-distributed throughout both the process dynamics (\cref{eq:normal_sol}) and the observations filtering (\cref{sec:observation_filter}). This allows NESDE to efficiently capture the distribution of $X(t)$, where the estimated covariance represents the uncertainty.

\textbf{Eigen-SDE solver (ESDE) -- spectral dynamics representation:}
A key feature of NESDE is that $A_i$ is only represented implicitly through the parameters $V,\lambda$ defining its eigenfunction $\Phi(t)$ of \cref{eq:eigenfunction} (we drop the interval index $i$ with a slight abuse of notation). The spectral representation allows \cref{eq:normal_sol} to solve $X(t)$ analytically for any $t\in\mathcal{I}_i$ at once: the predictions are not limited to predefined times, and do not require recursive iterations with constant time-steps. This is particularly useful in the sparsely-observable setup of \cref{sec:problem_setup}, as it lets numerous predictions be made \textit{at once} without being ``interrupted" by a new measurement.

The calculation of \cref{eq:normal_sol} requires efficient integration. Many SDE solvers apply recursive numeric integration~\citep{chen2018neural,gru_ode}. In NESDE, however, thanks to the spectral decomposition, the integration only depends on known functions of $t$ instead of $X(t)$ (\cref{eq:normal_sol}), hence recursion is not needed, and the computation can be paralleled. Furthermore, if the control $u$ is constant over an interval $\mathcal{I}_i$ (or has any other analytically-integrable form), \cref{sec:integrator} shows how to calculate the integration \textit{analytically}. Piecewise constant $u$ is common, for example, when the control is updated along with the observations.

In addition to simplifying the calculation, the spectrum of $A_i$ carries significant meaning about the dynamics. For example, negative eigenvalues correspond to a stable solution, whereas imaginary ones indicate periodicity. Note that the (possibly-complex) eigenvalues must be constrained to represent a real matrix $A_i$. The constraints and the calculations in the complex space are detailed in \cref{sec:complex_eigens}.

Note that if the process $X(t)$ actually follows the piecewise linear model of \cref{eq:NESDE_model}, and the model parameters are known correctly, then the Eigen-SDE solver trivially returns the optimal predictions. The complete proposition and proof are provided in \cref{sec:solver}.
\begin{proposition}[Eigen-SDE solver optimality]
\label{prop:optimality}
    If $X(t)$ follows \cref{eq:NESDE_model} with the same parameters used by the Eigen-SDE solver, then under certain conditions, the solver prediction at any point of time optimizes both the expected error and the expected log-likelihood.
    \vspace{-0.2cm}
\end{proposition}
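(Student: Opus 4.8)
The plan is to show that, under the stated assumptions, the solver reports the \emph{exact} posterior law of the latent state, and then to invoke two standard decision-theoretic facts: the posterior mean is the minimizer of the expected squared error, and the true posterior density is the maximizer of the expected log-likelihood.

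First I would establish the central claim that the pair $(\mu(t),\Sigma(t))$ returned by the Eigen-SDE solver is exactly the conditional law of $X(t)$ given all observations received before $t$. This rests on two ingredients already available in the excerpt. Within a single interval $\mathcal{I}_i$, if $X(t_i)\sim N(\mu_0,\Sigma_0)$ then the analytic solution of the linear SDE (\cref{eq:normal_sol}) shows that $X(t)$ stays Gaussian with the computed moments, since the dynamics of \cref{eq:NESDE_model} are exactly those of \cref{eq:lin_sde} with $A\coloneqq A_i$ and $\tilde u\coloneqq Bu-A_i\alpha$. At each observation time the Kalman update of \cref{sec:observation_filter} produces the exact Bayesian posterior $X(t_i)\mid \hat Y(t_i)$, again Gaussian. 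Because each interval is initialized with the (Gaussian) output of the previous step, an induction over the sequence of intervals and observation times shows that Gaussianity --- and hence exactness of $(\mu(t),\Sigma(t))$ --- is preserved throughout. Marginalizing to the first $m$ coordinates then gives the exact posterior of $Y(t)=X(t)_{1:m}$, which is Gaussian with mean $\mu(t)_{1:m}$ and the corresponding sub-block of $\Sigma(t)$.

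Given exactness, the expected-error claim follows from the orthogonality property of conditional expectation: for any observation-measurable estimator $\hat Y$, writing $m^\ast\coloneqq E[Y(t)\mid \text{obs}]=\mu(t)_{1:m}$, one has $E\|Y(t)-\hat Y\|^2 = E\|Y(t)-m^\ast\|^2 + E\|m^\ast-\hat Y\|^2$, so the solver's point prediction $m^\ast$ is the unique minimizer. The expected-log-likelihood claim follows from Gibbs' inequality: if $p$ denotes the true predictive density $N(\mu(t),\Sigma(t))$ and $q$ any alternative predictive density, then $E_p[\log q]=E_p[\log p]-\mathrm{KL}(p\Vert q)\le E_p[\log p]$, with equality iff $q=p$; hence reporting the true posterior maximizes the expected log-likelihood. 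Both statements are conditional on the observation history, and both transfer to any fixed $t$, giving optimality ``at any point of time.''

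The main obstacle is making the exactness step rigorous rather than the two optimality facts, which are routine. Concretely, I expect the bookkeeping of the induction to be the delicate part: one must verify that Gaussianity genuinely survives every interval switch (a dynamics update with no new data) and every observation update, and that the analytic integral in \cref{eq:normal_sol} is evaluated exactly --- which is why the ``certain conditions'' should require a correctly specified Gaussian prior $N(\mu_0,\Sigma_0)$, the true observation-noise covariance $R_C$, and a control $u$ that is piecewise constant (or otherwise analytically integrable) so the integrals are computed without approximation. A secondary subtlety is that the estimand is the observable $Y$ rather than the full latent $X$; this is harmless because Gaussian marginals remain Gaussian and the sub-vector of the posterior mean equals the posterior mean of the sub-vector, so both optimality properties descend to the $1{:}m$ block unchanged.
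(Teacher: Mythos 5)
Your proposal follows essentially the same route as the paper's own proof: an induction over the intervals showing that the solver's Gaussian $(\mu(t),\Sigma(t))$ is the exact law of $X(t)$ (using the analytic solution of the linear SDE within each interval, the exact Kalman/conditioning update at observation times, and piecewise-constant control so the integrals are exact), followed by the standard facts that the true posterior mean minimizes expected squared error and the true posterior density maximizes expected log-likelihood. Your version is, if anything, slightly more careful than the paper's --- you make the final decision-theoretic step explicit via the orthogonality decomposition and Gibbs' inequality, and you handle the marginalization to the observable block $Y=X_{1:m}$, which the paper leaves implicit --- but these are refinements of the same argument, not a different one.
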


\begin{proof}[Proof Sketch]
If the process $X(t)$ follows \cref{eq:NESDE_model}, the Eigen-SDE solver output corresponds to the true distribution $X(t)\sim N(\mu(t),\Sigma(t))$ for any $t$. Thus, both the expected error and the expected log-likelihood are optimal.
\vspace{-0.2cm}
\end{proof}
\vspace{-0.1cm}

\begin{wrapfigure}[16]{r}{0.55\textwidth}
\vspace{-0.75cm}
\begin{minipage}[h]{0.55\textwidth}
\begin{algorithm}[H]
   \caption{NESDE}
    \label{alg:NESDE}
\begin{algorithmic}
    \STATE \textbf{Input:} context $C$; control signal $u(t)$; update times $\mathcal{I}\in\mathbb{R}^T$; prediction times $\{P_{\mathcal{I}_i}\}_{\mathcal{I}_i\in\mathcal{I}}$
    \STATE \textbf{Initialize:} $\mu, \Sigma, \alpha, R \leftarrow \mathbf{Prior}(C)$
    \FOR{$\mathcal{I}_i$ in $\mathcal{I}$:}
        \STATE $V, \lambda, Q, B, \alpha, R \leftarrow \mathbf{Hypernet}(C,\mu,\Sigma)$
        \FOR{$t$ in $P_{\mathcal{I}_i}$}
        \STATE $\mu_t, \Sigma_t \leftarrow \mathbf{ESDE}\big(\mu,\Sigma,u,t;V,\lambda,Q,B\big)$
            \STATE \textbf{predict:} $\tilde{Y}_t\sim \mathcal{N}\big(\mu_t + \alpha,\Sigma_t + R\big)$
            \IF{given observation $\hat{Y}_t$}
                \STATE $\mu,\Sigma \leftarrow \mathbf{Filter}(\mu_t,\Sigma_t,R,\hat{Y}_t)$
            \ENDIF
        \ENDFOR
    \ENDFOR
\end{algorithmic}
\end{algorithm}
\end{minipage}
\end{wrapfigure}

\textbf{Updating solver and filter parameters:}
NESDE provides the parameters $V,\lambda,Q,B,\alpha$ to the Eigen-SDE solver, as well as the noise $R$ to the observation filter. As NESDE assumes a \textit{piecewise} linear model, it separates the time into intervals $\mathcal{I}_i=\left(t_i,t_{i+1}\right)$ (the interval length is a hyperparameter), and uses a dedicated model to predict new parameters at the beginning $t_i$ of every interval.

The model receives the current state $X(t_i)$ and the contextual information $C$, and returns the parameters for $\mathcal{I}_i$. Specifically, we use Hypernet~\citep{ha2016hypernetworks}, where one neural network $g_1(C;\Theta)$ returns the weights of a second network: $(V,\lambda,Q,B,\alpha,R) \coloneqq g_2(X;W)=g_2(X;g_1(C;\Theta))$. For the initial state, where $X$ is unavailable, we learn a \textit{state prior} from $C$ by a dedicated network; this prior helps NESDE to function as a ``zero-shot'' model.

The Hypernet module implementation gives us control over the non-linearity and non-stationarity of the model.
The importance of the Hypernet module for individualized modeling is demonstrated in an ablation test in \cref{sec:medical_detailed}.
In our current implementation, only $V,\lambda,Q$ are renewed every time interval. $\alpha$ (asymptotic signal) and $R$ (observation noise) are only predicted once per sequence, as we assume they are independent of the state. The control mapping $B$ is assumed to be a global parameter.

\textbf{Training:}
The learnable parameters of NESDE are the control mapping $B$ and Hypernet's parameters $\Theta$ (which in turn determine the rest of the parameters). To optimize them, the training relies on a dataset of sequences of control signals $\{u_{seq}(t_j)\}_{seq,j}$ and (sparser and possibly irregular) states and observations $\{(Y_{seq}(t_j), \hat{Y}_{seq}(t_j))\}_{seq,j}$ (if $Y$ is not available, we use $\hat{Y}$ instead as the training target). 

The latent space dimension $n$ and the model-update frequency $\Delta t$ are determined as hyperparameters. Then, we use the standard Adam optimizer~\citep{adam} to optimize the parameters with respect to the loss $NLL(j) = -\log P(Y(t_j)|\mu(t_j),\Sigma(t_j))$ (where $\mu,\Sigma$ are predicted by NESDE sequentially from $u,\hat{Y}$). Each training iteration corresponds to a batch of sequences of data, where the $NLL$ is aggregated over all the samples of the sequences. Note that our supervision for the training is limited to the times of the observations, even if we wish to make more frequent predictions in inference.

As demonstrated below, the unique architecture of NESDE provides effective regularization and data efficiency (due to piecewise linearity), along with tunable expressiveness (neural updates with controlled frequency). Yet, it is important to note that the piecewise linear SDE operator does limit the expressiveness of the model (e.g., in comparison to other neural-ODE models). Further, NESDE is only optimal under a restrictive set of assumptions, as specified in \cref{prop:optimality}.

\section{Synthetic Data Experiments}
\label{sec:synthetic_experiments}

In this section, we test three main aspects of NESDE: (1) prediction from partial and irregular observations, (2) robustness to out-of-distribution control (OOD), and (3) sample efficiency. We experiment with data of a simulated stochastic process, designed to mimic partially observable medical processes with indirect control.

\begin{wrapfigure}{r}{0.54\linewidth}
\vspace{-0.4cm}
\centering
\begin{subfigure}{0.494\linewidth}
  \centering
  \includegraphics[width=1\linewidth]{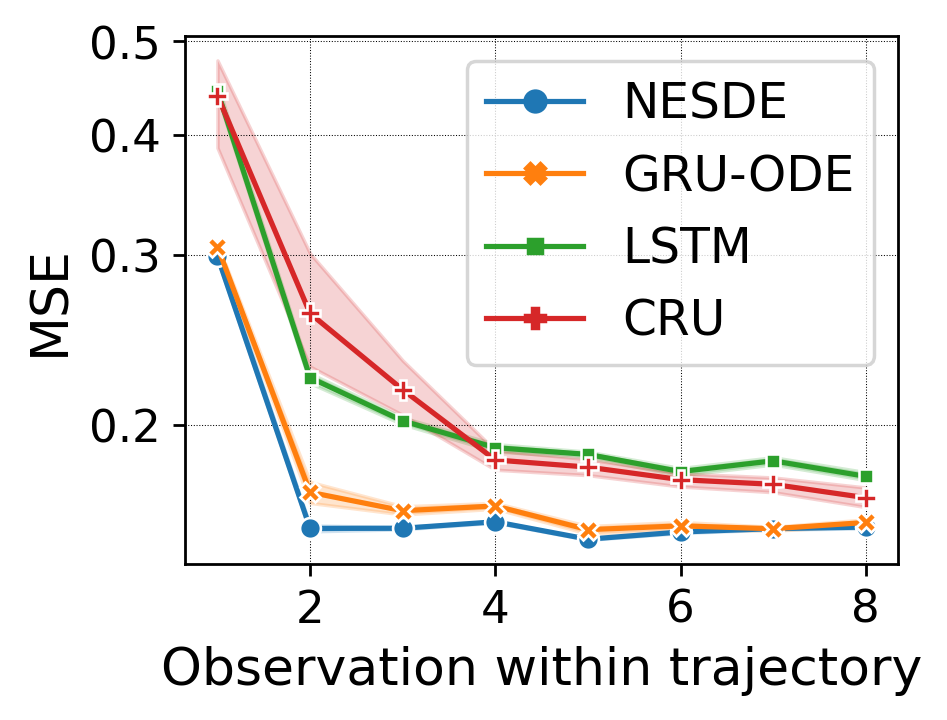}
  \caption{Same control distribution}
  \label{fig:res_LSTM_uni}
\end{subfigure}
\begin{subfigure}{.494\linewidth}
  \centering
  \includegraphics[width=1\linewidth]{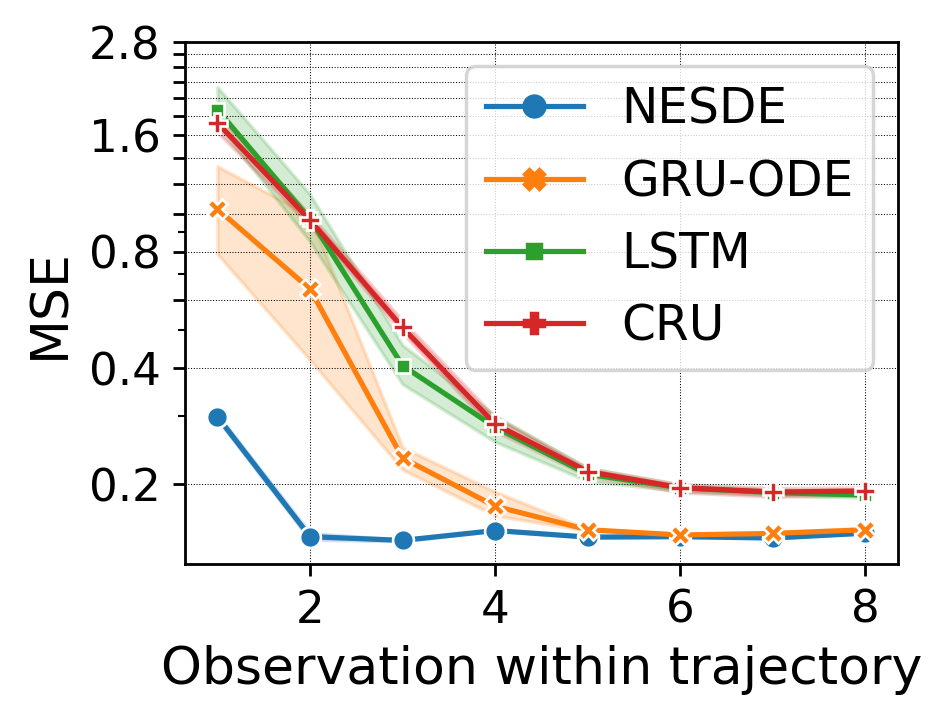}
  \caption{Out of distribution control}
  \label{fig:res_LSTM_ood}
\end{subfigure}
\vspace{-0.3cm}
\caption{\footnotesize MSE vs.~number of observations so far in the trajectory, in the complex dynamics setting, for: (a) standard test set, and (b) test set with out-of-distribution control policy.
95\% confidence intervals are calculated over 5 seeds.}
\vspace{-0.4cm}
\end{wrapfigure}

The simulated data includes trajectories of a 1-dimensional signal $Y$, with noiseless measurements at random irregular times. The goal is to predict the future values of $Y$ given its past observations. However, $Y$ is mixed with a latent (unobservable) variable, and they follow linear dynamics with both decay and periodicity (i.e., complex dynamics eigenvalues). In addition, we observe a control signal that affects the latent variable (hence affects $Y$, but only indirectly through the dynamics). The control negatively correlated with the observations: $u_t = b_t-0.5\cdot Y_t$. $b_t\sim U[0,0.5]$ is a piecewise constant additive noise (changing 10 times per trajectory). 

As baselines for comparison, we choose recent ODE-based methods that provide Bayesian uncertainty estimation: \textbf{GRU-ODE}-Bayes~\citep{gru_ode} and \textbf{CRU}~\citep{schirmer2022modeling}. In these methods, concatenating the control signal to the observation results in poor learning, as the control becomes part of the model output and dominates the loss function. To enable effective learning for the baselines, we mask-out the control from the loss. Additionally, 
we design a dedicated \textbf{LSTM} model that supports irregular predictions, as described in \cref{sec:lstm_medical}.

\begin{table*}
\vspace{-0.3cm}
\centering
\caption{\footnotesize Test errors in the irregular synthetic benchmarks, estimated over 5 seeds and 1000 test trajectories per seed, with standard deviation calculated across seeds.}
\label{res:irrg}
\begin{tabular}{@{}lcccc@{}}
\toprule
\multirow{2}{*}{\textbf{Model}} & \multicolumn{2}{c}{\textbf{Complex dynamics eigenvalues}} & \multicolumn{2}{c}{\textbf{Real dynamics eigenvalues}} \\ \cmidrule(l){2-5} 
                                & \textbf{MSE}                 & \textbf{OOD MSE}           & \textbf{MSE}               & \textbf{OOD MSE}          \\ \midrule
\textbf{LSTM}                   & $0.23\pm 0.001$              & $0.589\pm 0.02$            & $0.381\pm 0.002$           & $2.354\pm 0.84$           \\
\textbf{GRU-ODE-Bayes}          & $0.182\pm 0.0004$            & $0.361\pm 0.044$           & $\mathbf{0.219\pm 0.0004}$ & $0.355\pm 0.005$          \\
\textbf{CRU}                    & $0.233\pm 0.0054$            & $0.584\pm 0.009$           & $0.231\pm 0.001$           & $0.541\pm 0.026$          \\
\textbf{NESDE (ours)}           & $\mathbf{0.176\pm 0.0001}$   & $\mathbf{0.178\pm 0.001}$  & $0.222\pm 0.0005$          & $\mathbf{0.332\pm 0.005}$ \\ \bottomrule
\end{tabular}
\vspace{-0.3cm}
\end{table*}

\textbf{Out-of-distribution control (OOD):}
We simulate two benchmarks -- one with \textit{complex} eigenvalues and another with \textit{real} eigenvalues (no periodicity). We train all models on a dataset of 1000 random trajectories, and test on a separate dataset -- with different trajectories that follow the \textit{same distribution}. In addition, we use an \textit{OOD} test dataset, where the control is positively correlated with the observations: $u_t = b_t+0.5\cdot Y_t$. This can simulate, for example, forecasting of the same biochemical process after changing the medicine dosage policy.

\cref{res:irrg} and \cref{fig:res_LSTM_uni} summarize the prediction errors. Before changing the control policy, NESDE achieves the best accuracy in the complex dynamics, and is on par with GRU-ODE-Bayes in the real dynamics. Notice that CRU, which relies on a real-valued linear model in latent space, is indeed sub-optimal under the complex dynamics, compared to NESDE and GRU-ODE-Bayes. The LSTM presents high errors in both benchmarks.

Once the control changes, all models naturally deteriorate. Yet, NESDE presents the smallest deterioration and best accuracy in the OOD test datasets -- for both complex and real dynamics. In particular, NESDE provides a high prediction accuracy after mere 2 observations (\cref{fig:res_LSTM_ood}), making it a useful zero-shot model. The robustness to the modified control policy can be attributed to the model of NESDE in \cref{eq:NESDE_model}, which decouples the control from the observations.

In a similar setting in \cref{sec:oracle_ood}, the control $u$ used in the training data has continuous knowledge of $Y$. Since the model only observes $Y$ in a limited frequency, $u$ carries additional information about $Y$. This results in extreme overfitting and poor generalization to different control policies -- for all methods except for NESDE, which maintains robust OOD predictions in this challenging setting.

\textbf{Sample efficiency:}
We train each method over datasets with different number of trajectories. Each model is trained on each dataset separately until convergence. As shown in \cref{fig:eff_main}, NESDE achieves the best test accuracy for every training dataset, and learns reliably even from as few as 100 trajectories. The other methods deteriorate significantly in the smaller datasets. Note that in the real dynamics, LSTM fails regardless of the amount of data, as also reflected in \cref{res:irrg}.

\begin{wrapfigure}{r}{0.54\linewidth}
\vspace{-0.4cm}
\centering
\begin{subfigure}{0.494\linewidth}
  \centering
  \includegraphics[width=1\linewidth]{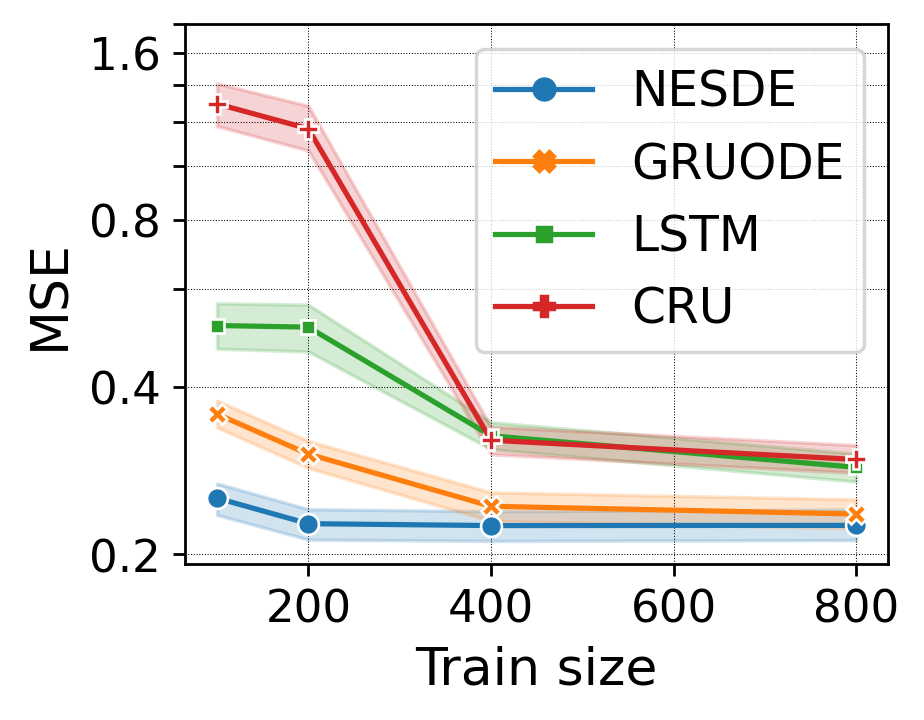}
  \caption{Complex dynamics}
  \label{fig:eff_comp}
\end{subfigure}
\begin{subfigure}{0.494\linewidth}
  \centering
  \includegraphics[width=1\linewidth]{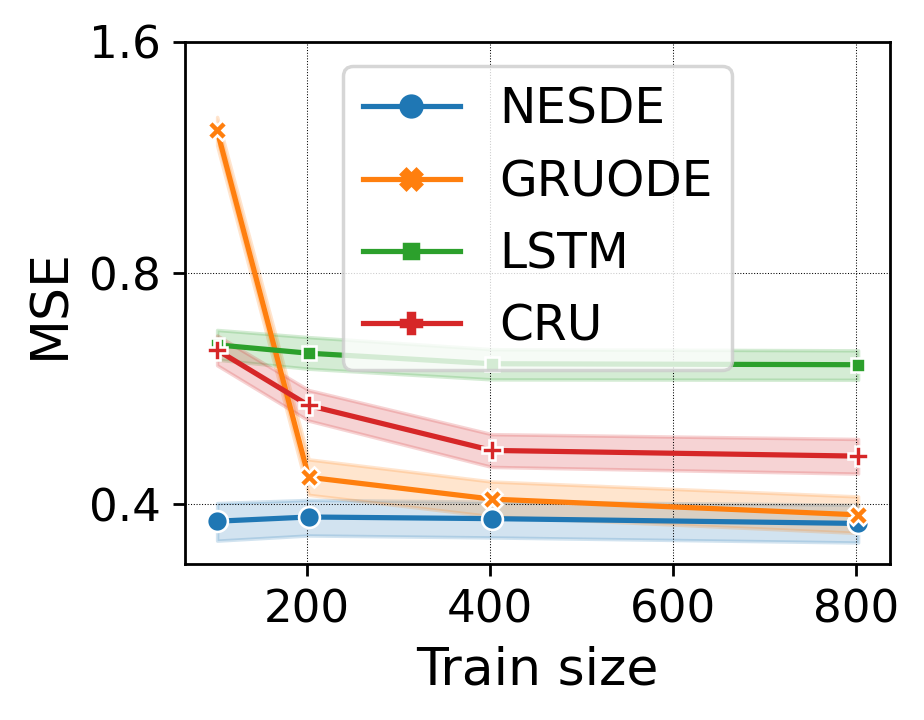}
  \caption{Real dynamics}
  \label{fig:eff_real}
\end{subfigure}
\vspace{-0.3cm}
\caption{\footnotesize Test MSE vs.~train data size. 95\% confidence intervals are calculated over 1000 test trajectories.}
\label{fig:eff_main}
\vspace{-0.4cm}
\end{wrapfigure}

GRU-ODE-Bayes achieves the best sample efficiency among the baselines. In \cref{sec:gru_experiments}, we use \textbf{a benchmark from the study of GRU-ODE-Bayes itself} \citep{gru_ode}, and demonstrate the superior sample efficiency of NESDE in that benchmark as well. \cref{sec:sparsity} extends the notion of sample efficiency to sparse trajectories: for a constant number of training trajectories, it reduces the number of observations per trajectory. NESDE demonstrates high robustness to the amount of data in that setting as well.

\textbf{Regular LSTM:}
\cref{app:regular_lstm} extends the experiments for regular data with constant time-steps. In the regular setting, LSTM provides competitive accuracy when observations are dense. However, LSTM fails if the signal is only observed once in multiple time-steps, possibly because gradients have to be propagated over many steps. Hence, even in regular settings, LSTM struggles to provide predictions more frequent than the measurements.

\section{Medication Dosing Regimes}
\label{sec:real_experiments}

As discussed in \cref{sec:intro}, many medical applications could potentially benefit from ODE-based methods. Specifically, we address medication dosing problems, where observations are often sparse, the dosing is a control signal, and uncertainty estimation is crucial. We test \textbf{NESDE} on two such domains, against the same baselines as in \cref{sec:synthetic_experiments} (\textbf{GRU-ODE-Bayes}, \textbf{CRU} and an irregular \textbf{LSTM}). We also add a \textbf{naive} model with ``no-dynamics" (predicts the last observed value).

The benchmarks in this section were derived from the MIMIC-IV dataset~\citep{Johnson_2021}. Typically to electronic health records, the dataset contains a vast amount of side-information (e.g., weight and heart rate). We use some of this information as an additional input -- for each model according to its structure (context-features for the hyper-network of NESDE, covariates for GRU-ODE-Bayes, state variables for CRU, and embedding units for the LSTM). Some context features correspond to online measurements and are occasionally updated. In both domains, we constraint the process eigenvalues $\lambda$ to be negative, to reflect the stability of the biophysical processes. Indeed, the spectral representation of NESDE provides us with a natural way to incorporate such domain knowledge, which often cannot be used otherwise.
For all models, in both domains, we use a 60-10-30 train-validation-test data partition.
See more implementation details in \cref{sec:medical_detailed}.

\subsection{Unfractionated Heparin Dosing}
\label{sec:hep_dosing}

Unfractionated Heparin (UH) is a widely used anticoagulant drug. It may be given in a continuous infusion to patients with life-threatening clots and works by interfering with the normal coagulation cascade. As the effect is not easily predicted, the drug's actual activity on coagulation is traditionally monitored using a lab test performed on a blood sample from the patient: activated Partial Thromboplastin Time (aPTT) test. The clinical objective is to keep the aPTT value in a certain range. The problem poses several challenges: different patients are affected differently; the aPTT test results are delayed; monitoring and control are required in higher frequency than measurements; and deviations of the aPTT from the objective range may be fatal.
\begin{wrapfigure}{r}{0.55\linewidth}
\vspace{-0.5cm}
\centering
\begin{subfigure}{.5\linewidth}
  \centering
  \includegraphics[width=1\linewidth]{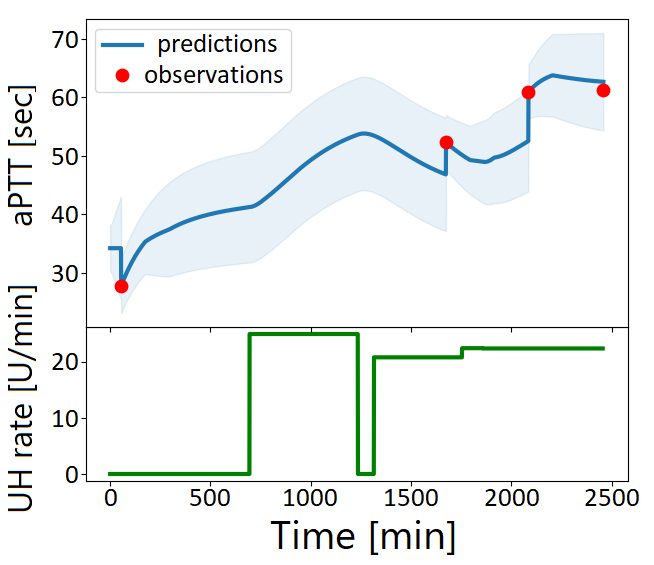}
  \caption{Heparin}
\end{subfigure}
\begin{subfigure}{.483\linewidth}
  \centering
  \includegraphics[width=1\linewidth]{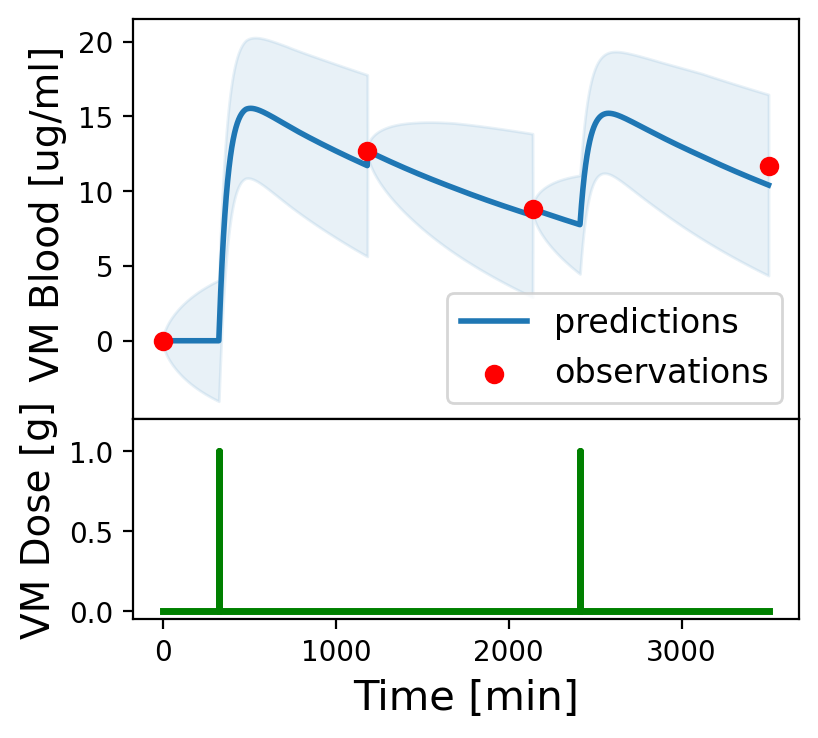}
  \caption{Vancomycin}
\end{subfigure}
\vspace{-0.3cm}
\caption{\footnotesize A sample of patients from (a) the UH dosing dataset, and (b) the VM dosing dataset. The lower plots correspond to medication dosage (UH in (a) and VM in (b)). The upper plots correspond to the continuous prediction of NESDE (aPTT levels in (a) and VM concentration in (b)), with 95\% confidence intervals. In both settings, the prediction at every point relies on all the observations up to that point.}
\label{fig:traj_patients}
\vspace{-0.6cm}
\end{wrapfigure}
In particular, underdosed UH may cause clot formation and overdosed UH may cause an internal bleeding~\citep{landefeld1987identification}. Dosage rates are manually decided by a physician, using simple protocols and trial-and-error over significant amounts of time. Here we focus on continuous prediction as a key component for aPTT control.

Following the preprocessing described in \cref{sec:preprocessing_medical}, the MIMIC-IV dataset derives $5866$ trajectories of a continuous UH control signal, an irregularly-observed aPTT signal (whose prediction is the goal), and $42$ context features. It is known that UH does not affect the coagulation time (aPTT) directly (but only through other unobserved processes, \citet{Delavenne2017}); thus, we mask the control mapping $B$ to have no direct effect on the aPTT metric, but only on the latent variable (which can be interpreted as the body UH level). The control (UH) and observations (aPTT) are one-dimensional ($m=1$), and we set the whole state dimension to $n=4$.

\subsection{Vancomycin Dosing}
\label{sec:vanco_dosing}

\begin{table*}[t]
\vspace{-0.3cm}
\centering
\caption{\footnotesize Test mean square errors (MSE) and negative log-likelihood (NLL, for models that provide probabilistic prediction) in the medication-dosing benchmarks.}
\label{table:med}
\begin{tabular}{lcccc}
\hline
\multirow{2}{*}{\textbf{Model}} & \multicolumn{2}{c}{\textbf{UH Dosing}}             & \multicolumn{2}{c}{\textbf{Vancomycin Dosing}}     \\ \cmidrule(l){2-5}
                                & \textbf{MSE}             & \textbf{NLL}            & \textbf{MSE}             & \textbf{NLL}            \\ \hline
\textbf{Naive}                  & $613.3\pm 13.48$         & $-$                     & $112.2\pm 16.4$          & $-$                     \\
\textbf{LSTM}                   & $482.1\pm 6.52$          & $-$                     & $92.89\pm 11.3$          & $-$                     \\
\textbf{GRU-ODE-Bayes}          & $491\pm 6.88$            & $4.52\pm 0.008$         & $80.54\pm 11.8$          & $6.38\pm 0.12$          \\
\textbf{CRU}                    & $450.4\pm 8.27$          & $4.49\pm 0.012$         & $76.4\pm 12.8$           & $3.87\pm 0.2$           \\
\textbf{NESDE (ours)}           & $\mathbf{411.2\pm 7.39}$ & $\mathbf{4.43\pm 0.01}$ & $\mathbf{70.71\pm 12.3}$ & $\mathbf{3.69\pm 0.13}$ \\ \hline
\end{tabular}
\vspace{-0.3cm}
\end{table*}

\begin{wrapfigure}[17]{r}{0.41\linewidth}
    \centering
    \vspace{-0.52cm}
    \includegraphics[width=1.\linewidth]{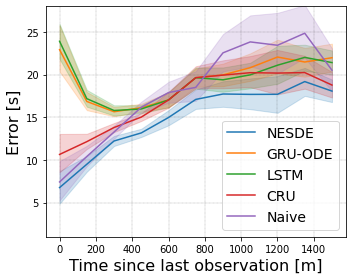}
    \vspace{-0.3cm}
    \caption{\footnotesize aPTT prediction errors in the UH problem, vs.~the time passed since the last aPTT test.}
    \label{fig:res_UH}
    \vspace{-0.2cm}
\end{wrapfigure}

Vancomycin (VM) is an antibiotic that has been in use for several decades. However, the methodology of dosing VM remains a subject of debate in the medical community \citep{10.2146/ajhp080434}, and there is a significant degree of variability in VM dynamics among patients \citep{marsot2012vancomycin}. The dosage of VM is critical; it could become toxic if overdosed \citep{filippone2017nephrotoxicity}, and ineffective as an antibiotic if underdosed. The concentration of VM in the blood can be measured through lab test, but these tests are often infrequent and irregular, which fits into our problem setting. 

Here, the goal is to predict the VM concentration in the blood at any given time, where the dosage and other patient measurements are known. Following the preprocessing described in \cref{sec:preprocessing_medical}, the dataset derives $3564$ trajectories of VM dosages at discrete times, blood concentration of VM ($m=1$) at irregular times, and similarly to \cref{sec:hep_dosing}, $42$ context features. This problem is less noisy than the UH dosing problem, as the task is to learn the direct dynamics of the VM concentration, and not the effects of the antibiotics. The whole state dimension is set to $n=2$, and we also mask the control mapping $B$ to have no direct effect on the VM concentration, where the latent variable that directly affected could be viewed as the amount of drug within the \textit{whole body}, which in turn affects the actual VM concentration in the \textit{blood}.

\subsection{Results}
\label{sec:res_med}

\cref{fig:traj_patients} displays sample trajectories predicted by NESDE in both domains. As summarized in \cref{table:med}, NESDE outperforms the other baselines in both UH and VM dosing tasks, in terms of both square errors (MSE) and likelihood (NLL). For the UH dosing problem, \cref{fig:res_UH} also presents the errors vs.~prediction horizon (the time passed since the last observation).
Evidently, \textbf{NESDE provides the best accuracy in all the horizons}.
Its smoothness allows NESDE to obtain high accuracy in short horizons, and its robustness allows \textbf{generalization to out-of-distribution horizons}: while most of the data corresponds to horizons of 5-7 hours (see \cref{fig:horizon_hist} in the appendix), NESDE provides reliable prediction at other horizons as well.
By contrast, LSTM and GRU-ODE-Bayes have difficulty with short horizons; they only become competitive with the \textit{naive} model after 6 hours.
CRU provides more robust predictions, but is still outperformed by NESDE.

Despite the large range of aPTT levels in the data (e.g., the top 5\% are above $100s$ and the bottom 5\% are below $25s$), 50\% of all the predictions have errors lower than $12.4s$ -- an accuracy level that is considered clinically safe. \cref{fig:res_UH} shows that indeed, up to 3 hours after the last lab test, the average error is smaller than $10s$.

\section{Conclusion}
\label{sec:summary}

Motivated by medical forecasting and control problems, we characterized a set of challenges in modeling continuous dosing dynamics: sample efficiency, uncertainty estimation, personalized modeling, continuous inference and generalization to different control policies. To address these challenges, we introduced the novel NESDE algorithm, based on a stochastic differential equation with spectral representation. We demonstrated the reliability of NESDE in a variety of synthetic and real data experiments, including high noise, little training data, contextual side-information and out-of-distribution control signals. In addition, NESDE demonstrated high prediction accuracy after as few as 2 observations, making it a useful zero-shot model.

We applied NESDE to two real-life high-noise medical problems with sparse and irregular measurements: (1) blood coagulation forecasting in Heparin-treated patients, and (2) Vancomycin levels prediction in patients treated by antibiotics. In both problems, NESDE significantly improved the prediction accuracy compared to alternative methods.

As demonstrated in the experiments, NESDE provides robust, reliable and uncertainty-aware continuous \textit{forecasting}. This paves the way to development of \textit{decision making} in continuous high-noise decision processes, including medical treatment, finance and operations management. Future research may address medical optimization via both control policies (e.g., to control medication dosing) and sampling policies (to control measurements timing, e.g., of blood tests).

\newpage


\bibliographystyle{plainnat}
\bibliography{main}

\newpage

\appendix
\newpage
\addcontentsline{toc}{section}{Appendices} 
\part{Appendices} 
\parttoc 
\newpage

\section{Observation Filtering: The Conditional Distribution and the Relation to Kalman Filtering}
\label{sec:observation_filter}

As described in \cref{sec:NESDE}, the NESDE algorithm keeps an estimated Normal distribution of the system state $X(t)$ at any point of time. The distribution develops continuously through time according to the dynamics specified by \cref{eq:NESDE_model}, except for the discrete times where an observation $\hat{Y}(t)$ is received: in every such point of time, the $X(t)$ estimate is updated to be the conditional distribution $X(t)|\hat{Y}(t)$.

\textbf{Calculating the conditional Normal distribution:}
The conditional distribution can be derived as follows. Recall that $X\sim N(\mu,\Sigma)$ (we remove the time index $t$ as we focus now on filtering at a single point of time). Denote $X=(Y,Z)^\top$ where $Y\in\mathbb{R}^m$ (similarly to \cref{eq:model}) and $Z\in\mathbb{R}^{n-m}$; and similarly, $\mu=(\mu_Y,\mu_Z)^\top$ and
$$ 
\Sigma = \begin{pmatrix} \Sigma_{YY} & \Sigma_{YZ} \\ \Sigma_{ZY} & \Sigma_{ZZ} \end{pmatrix} 
$$
First consider a noiseless observation ($R=0$): then according to \citet{conditional_dist}, the conditional distribution $X|Y=\hat{Y}$ is given by $X=(Y,Z)^\top$, $Y=\hat{Y}$ and $Z\sim N(\mu_{Z}^\prime,\Sigma_{ZZ}^\prime)$, where
$$
\mu_Z^\prime \coloneqq \mu_{Z} + \Sigma_{ZY}\Sigma_{YY}^{-1}(\hat{Y}-\mu_Y)
$$
$$
\Sigma_{ZZ}^\prime \coloneqq \Sigma_{ZZ} - \Sigma_{ZY}\Sigma_{YY}^{-1}\Sigma_{YZ}
$$

In the general case of $R\ne0$, we can redefine the state to include the observation explicitly: $\tilde{X}=(\hat{Y},X)^\top=(\hat{Y},Y,Z)^\top$, where $\tilde{\mu},\tilde{\Sigma}$ of $\tilde{X}$ are adjusted by $\mu_{\hat{Y}}=\mu_y$, $\Sigma_{\hat{Y}\hat{Y}}=\Sigma_{YY}+R$, $\Sigma_{\hat{Y}Y}=R$ and $\Sigma_{\hat{Y}Z}=\Sigma_{YZ}$. Then, the conditional distribution can be derived as in the noiseless case above, by simply considering the new observation as a noiseless observation of $\tilde{X}_{1:m}=\hat{Y}$.

\textbf{The relation to the Kalman filtering:}
The derivation of the conditional distribution is equivalent to the filtering step of the Kalman filter~\citep{KF}, where the (discrete) model is
\begin{align*}
\label{eq:KF_model}
\begin{split}
    X_{t+1} &= A\cdot X_t + \omega_t \qquad (\omega_t\sim N(0,Q)) \\
    \hat{Y}_{t} &= H\cdot X_t + \nu_t \qquad (\nu_t\sim N(0,R)) ,
\end{split}
\end{align*}
Our setup can be recovered by substituting the following observation model $H\in\mathbb{R}^{m \times n}$, which observes the first $m$ coordinates of $X$ and ignores the rest:
$$
H = \begin{pmatrix} 1 &&  && & 0 & ... & 0 \\ & 1 &&  &&& \\ && ... &&&|&|&|  \\&&& 1 &&&\\&&&& 1 & 0 & ... & 0 \\ \end{pmatrix}
$$
and the Kalman filtering step is then
$$
K \coloneqq \Sigma H^\top (H\Sigma H^\top + R)^{-1}
$$
$$
\mu^\prime \coloneqq \mu + K(\hat{Y}-H\mu)
$$
$$
\Sigma^\prime \coloneqq \Sigma - KH\Sigma
$$
Note that while the standard Kalman filter framework indeed supports the filtering of distributions upon arrival of a new observation, its progress through time is limited to discrete and constant time-steps (see the model above), whereas our SDE-based model can directly make predictions to any arbitrary future time $t$.

\section{Integrator Implementation}
\label{sec:integrator}
Below, we describe the implementation of the integrator of the Eigen-SDE solver mentioned in \cref{sec:NESDE}.

\textbf{Numerical integration given $u(t)$:}
In the presence of an arbitrary (continuous) control signal $u(t)$, it is impossible to compute the integral that corresponds with $u(t)$ (\cref{eq:sde_sol}) analytically. On the other hand, $u(t)$ is given in advance, and the eigenfunction, $\Phi(t)$, is a known function that can be calculated efficiently at any given time. By discretizing the time to any fixed $\Delta t$, one could simply replace the integral by a sum term
$$
\int_{t_0}^{t}{\Phi(\tau)^{-1}u(\tau)d\tau} \approx \sum_{i=0}^{\frac{t-t_0}{\Delta t}}{\Phi(t_0 + i\cdot\Delta t)u(t_0 + i\cdot\Delta t)\Delta t}
$$
while this sum represent $\frac{t-t_0}{\Delta t}$ calculations, it can be computed efficiently, as it does not require any recursive computation, as both $\Phi(t)$ and $u(t)$ are pre-determined, known functions. Each element of the sum is independent of the other elements, and thus the computation could be parallelized.

\textbf{Analytic integration:}
The control $u$ is often constant over any single time-interval $\mathcal{I}$ (e.g., when the control is piecewise constant). In such cases, for a given interval $\mathcal{I}=[t_0,t]$ in which $u(t)=u_{\mathcal{I}}$, the integral could be solved analytically:
$$
\int_{t_0}^{t}{\Phi(\tau)^{-1}u(\tau)d\tau} = \int_{t_0}^{t}{e^{- \Lambda \tau}V^{-1}u_{\mathcal{I}} d\tau} = \int_{t_0}^{t}{e^{- \Lambda \tau}d\tau V^{-1}u_{\mathcal{I}}} = \frac{1}{\Lambda}\big(e^{- \Lambda t_0} - e^{- \Lambda t}\big)V^{-1}u_{\mathcal{I}}
$$
one might notice that for large time intervals this form is numerically unstable, to address this issue, note that this integral is multiplied (\cref{eq:sde_sol}) by $\Phi(t)=Ve^{\Lambda t}$, hence we stabilize the solution with the latter exponent:
$$
\Phi(t)\frac{1}{\Lambda}\big(e^{- \Lambda t_0} - e^{- \Lambda t}\big)V^{-1}u_{\mathcal{I}} = V\frac{1}{\Lambda}\big(e^{\Lambda(t - t_0)} - e^{\Lambda( t- t)}\big)V^{-1}u_{\mathcal{I}} = V\frac{1}{\Lambda}\big(e^{\Lambda(t - t_0)} - 1\big)V^{-1}u_{\mathcal{I}}
$$
to achieve a numerically stable computation.

In addition to the integral over $u(t)$, we also need to calculate the integral over $Q$ (\cref{eq:normal_sol}). In this case, $Q$ is constant, and the following holds;
$$
\int_{t_0}^t \Phi(\tau)^{-1}Q (\Phi(\tau)^{-1})^\top d\tau = \int_{t_0}^t e^{-\Lambda \tau}V^{-1}Q (V^{-1})^\top (e^{-\Lambda \tau})^\top d\tau = V^{-1}Q (V^{-1})^\top \circ \int_{t_0}^t e^{-\tilde{\Lambda} \tau} d\tau
$$
where $\circ$ denotes the Hadamard product, and 
$$
\tilde{\Lambda} = \begin{pmatrix} 2\lambda_1 & \cdots & \lambda_1 + \lambda_n \\ \vdots & \ddots \\ \lambda_n + \lambda_1 & \cdots & 2\lambda_n \end{pmatrix}
$$
In this form, it is possible to solve the integral analytically, similarly to the integral of the control signal, and again, we use the exponent term from $\Phi(t)$ to obtain a numerically stable computation.

\section{The Dynamics Spectrum and Complex Eigenfunction Implementation}
\label{sec:complex_eigens}
The form of the eigenfunction matrix as presented in \cref{sec:preliminaries} is valid for real eigenvalues. Complex eigenvalues induce a slightly different form; firstly, they come in pairs, i.e., if $z=a+bi$ is an eigenvalue of $A$ (\cref{eq:lin_sde}), then $\bar{z}=a-bi$ (the complex conjugate of $z$) is an eigenvalue of $A$. The corresponding eigenvector of $z$ is complex as well, denote it by $v=v_{real} + v_{im}i$, then $\bar{v}$ (the complex conjugate of $v$) is the eigenvector that correspond to $\bar{z}$. Secondly, the eigenfunction matrix takes the form:
$$
\Phi(t)=e^{a t}\begin{pmatrix} | & | \\ v_{real}\cdot cos(b t) - v_{im}\cdot sin(b t) & v_{im}\cdot cos(b t) + v_{real}\cdot sin(b t) \\ | & | \end{pmatrix}
$$
For brevity, we consider only the elements that correspond with $z$, $\bar{z}$. To parametrize this form, we use the same number of parameters (each complex number need two parameters to represent, but since they come in pairs with their conjugates we get the same overall number) which are organized differently. Mixed eigenvalues (e.g., both real and complex) induce a mixed eigenfunction that is a concatenation of the two forms. Since the complex case requires a different computation, we leave the number of complex eigenvalues to be a hyperparameter. Same as for the \textit{real} eigenvalues setting, it is possible to derive an analytical computation for the integrals. Here, it takes a different form, as the complex eigenvalues introduce trigonometric functions to the eigenfunction matrix. To describe the analytical computation, first notice that:
$$
\Phi(t) = e^{at} \begin{pmatrix} | & | \\ v_{real} & v_{im} \\ | & |
\end{pmatrix} \begin{pmatrix} cos(bt) & sin(bt) \\ -sin(bt) & cos(bt) \end{pmatrix}
$$
and thus:
$$
\Phi(t)^{-1} = e^{-at} \begin{pmatrix} cos(bt) & -sin(bt) \\ sin(bt) & cos(bt) \end{pmatrix} \begin{pmatrix} | & | \\ v_{real} & v_{im} \\ | & |
\end{pmatrix}^{-1}
$$
Note that here we consider a two-dimensional SDE, for the general case the trigonometric matrix is a block-diagonal matrix, and the exponent becomes a diagonal matrix in which each element repeats twice. It is clear that similarly to the real eigenvalues case, the integral term that includes $u$ (as shown above) can be decomposed, and it is possible to derive an analytical solution for an exponent multiplied by sine or cosine. One major difference is that here we use matrix product instead of Hadamard product. The integral over $Q$ becomes more tricky, but it can be separated and computed as well, with the assistance of basic linear algebra (both are implemented in our code).

\section{Solver Analysis}
\label{sec:solver}
Below is a more complete version of \cref{prop:optimality} and its proof.

\begin{proposition}[Eigen-SDE solver optimality: complete formulation]
Let $X(t)$ be a signal that follows \cref{eq:NESDE_model} for any time interval $\mathcal{I}_i=\left[t_i,t_{i+1}\right]$, and $u(t)$ a control signal that is constant over $\mathcal{I}_i$ for any $i$. For any $i$, consider the Eigen-SDE solver with the parameters corresponding to \cref{eq:NESDE_model} (for the same $\mathcal{I}_i$). Assume that the first solver ($i=0$) is initialized with the true initial distribution $X(0)\sim N(\mu_0,\Sigma_0)$, and for $i\ge1$, the $i$'th solver is initialized with the $i-1$'th output, along with an observation filter if an observation was received. For any interval $i$ and any time $t\in \mathcal{I}_i$, consider the prediction $\tilde{X}(t)\sim N(\mu(t),\Sigma(t))$ of the solver. Then, $\mu(t)$ minimizes the expected square error of the signal $X(t)$, and $\tilde{X}(t)$ maximizes the expected log-likelihood of $X(t)$.
\end{proposition}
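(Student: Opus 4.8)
The plan is to reduce both optimality claims to a single distributional fact: that the Gaussian $\tilde{X}(t)\sim N(\mu(t),\Sigma(t))$ produced by the solver is \emph{exactly} the true posterior distribution of $X(t)$ conditioned on all observations received up to time $t$. Once this identity is established, the two optimality statements follow from classical estimation theory — the posterior mean is the minimum-mean-square-error estimator, and the true distribution maximizes expected log-likelihood by Gibbs' inequality. So the real work is establishing the distributional identity, and the optimality corollaries are short.

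First I would set up an induction over the sequence of intervals and observation times. The base case is immediate: at $i=0$ the solver is initialized with the true $N(\mu_0,\Sigma_0)$. For the inductive step I would handle two kinds of transitions separately. \textbf{Propagation within an interval:} given that the law at $t_i$ is the true posterior, I would invoke the closed-form linear-SDE solution of \cref{eq:normal_sol} to argue that for any $t\in\mathcal{I}_i$ the pushed-forward law is again exactly $N(\mu(t),\Sigma(t))$; because the dynamics are linear with Gaussian noise, the propagated law stays Gaussian with precisely the parameters the solver uses. \textbf{Filtering at an observation:} when $\hat{Y}(t_i)$ arrives, I would invoke the Gaussian-conditioning computation of \cref{sec:observation_filter} to argue the updated law is exactly $X(t_i)\mid\hat{Y}(t_i)$; since conditioning a jointly Gaussian vector on a linear-Gaussian observation yields a Gaussian with the Kalman-update parameters, this step is also exact. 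Chaining these two exact transitions across all intervals and observation times closes the induction.

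Given the identity, the optimality claims are quick. For the mean-square error, the conditional expectation $\mu(t)=\mathbb{E}[X(t)\mid\text{obs}]$ is the MMSE estimator, i.e. it minimizes $\mathbb{E}[\lVert X(t)-\hat{X}\rVert^2]$ over all estimators $\hat{X}$ measurable with respect to the observations, by the orthogonality principle. For the log-likelihood, writing $p$ for the true posterior and $q$ for any candidate predictive density, $\mathbb{E}_{X\sim p}[\log q(X)]-\mathbb{E}_{X\sim p}[\log p(X)]=-\mathrm{KL}(p\Vert q)\le 0$, with equality iff $q=p$, so the solver's output $p$ maximizes the expected log-likelihood.

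I expect the main obstacle to be stating and maintaining the distributional invariant cleanly across the irregular, sparse observation schedule — making precise what ``the true posterior of $X(t)$ given observations up to $t$'' means and verifying that both the propagation and filtering steps preserve it exactly. The one genuinely load-bearing hypothesis is that $u$ is constant on each interval: this is what lets the analytic integration of \cref{sec:integrator} reproduce the exact solution of \cref{eq:normal_sol} rather than a numerical approximation, so the solver's Gaussian matches the true law \emph{exactly}; a general continuous control would only yield the approximate integrated form, breaking exactness and hence optimality. By contrast, the two optimality corollaries are routine.
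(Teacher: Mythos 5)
Your proposal is correct and follows essentially the same route as the paper's proof: an induction over intervals showing that the solver's Gaussian output is exactly the true (conditional) distribution of $X(t)$ --- with propagation handled by the closed-form solution of \cref{eq:normal_sol} (exact because $u$ is constant per interval) and observation times handled by the Gaussian conditioning of \cref{sec:observation_filter} --- after which both optimality claims follow immediately. Your only departures are cosmetic: you justify the final corollaries more explicitly (orthogonality principle for MMSE, Gibbs' inequality for the log-likelihood) where the paper simply asserts them, and you correctly flag the constant-control assumption as the load-bearing hypothesis, exactly as the paper's proof does parenthetically.
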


\begin{proof}
We prove by induction over $i$ that for any $i$ and any $t\in\mathcal{I}_i$, $\tilde{X}(t)$ corresponds to the true distribution of the signal $X(t)$.

For $i=0$, $X(t_i)=X(0)$ corresponds to the true initial distribution, and since there are no ``interrupting" observations within $\mathcal{I}_0$, then the solution \cref{eq:sde_sol,eq:normal_sol} of \cref{eq:NESDE_model} corresponds to the true distribution of $X(t)$ for any $t\in\left[t_i,t_{i+1}\right)$. Since $u$ is constant over $\mathcal{I}_0$, then the prediction $\tilde{X}(t)$ of the Eigen-SDE solver follows \cref{eq:normal_sol} accurately using the analytic integration (see \cref{sec:integrator}; note that if $u$ were not constant, the solver would still follow the solution up to a numeric integration error). Regarding $t_1$, according to \cref{sec:observation_filter}, $\tilde{X}(t_1)$ corresponds to the true distribution of $X(t_1)$ \textit{after} conditioning on the observation $\hat{Y}(t_1)$ (if there was an observation at $t_1$; otherwise, no filtering is needed). This completes the induction basis. Using the same arguments, if we assume for an arbitrary $i\ge 0$ that $\tilde{X}(t_i)$ corresponds to the true distribution, then $\tilde{X}(t)$ corresponds to the true distribution for any $t\in\mathcal{I}_i=\left[t_i,t_{i+1}\right]$, completing the induction.

Now, for any $t$, since $\tilde{X}(t)\sim N(\mu(t),\Sigma(t))$ is in fact the true distribution of $X(t)$, the expected square error $E\left[SE(t)\right]=E\left[(\mu-X(t))^2\right]$ is minimized by choosing $\mu\coloneqq \mu(t)$; and the expected log-likelihood $E\left[\ell(t)\right]=E\left[\log P(X(t)|\mu,\Sigma)\right]$ is maximized by $\mu\coloneqq \mu(t),\Sigma\coloneqq \Sigma(t)$.
\end{proof}

\section{Extended Experiments}
\label{app:extended_experiments}


\subsection{Ablation Study for Patient Individualization}
\label{app:ablation_hn}
To provide an insight over the importance of the dynamics-individualization, we perform an ablation study for the hypernetwork module. We use the same medical benchmarks as in \cref{sec:real_experiments}, and fit a version of NESDE with neutralized hypernetwork module. In particular, we fix the context inputs of the module to be a vector of $1$s, and thus prevent any propagation from the context features to the model's output. The results are presented in \cref{table:med_app}, and show a great degradation in model performance in the UH-dosing benchmark, approving that the hypernetwork indeed utilize the information within the context features. In the Vancomycin dosing benchmark, while we still observe a degradation comparing to NESDE, the version of NESDE without hypernetwork still outperforms LSTM in terms of MSE and the rest of the baselines (except NESDE) in terms of NLL.  

\begin{table*}[t]
\vspace{-0.3cm}
\centering
\caption{\footnotesize Test mean square errors (MSE) and negative log-likelihood (NLL) in the medication-dosing benchmarks. This is an extension of \cref{table:med} with the additional results of NESDE without the hypernetwork.}
\label{table:med_app}
\begin{tabular}{lcccc}
\hline
\multirow{2}{*}{\textbf{Model}} & \multicolumn{2}{c}{\textbf{UH Dosing}}             & \multicolumn{2}{c}{\textbf{Vancomycin Dosing}}     \\ \cmidrule(l){2-5}
                                & \textbf{MSE}             & \textbf{NLL}            & \textbf{MSE}             & \textbf{NLL}            \\ \hline
\textbf{Naive}                  & $613.3\pm 13.48$         & $-$                     & $112.2\pm 16.4$          & $-$                     \\
\textbf{LSTM}                   & $482.1\pm 6.52$          & $-$                     & $92.89\pm 11.3$          & $-$                     \\
\textbf{GRU-ODE-Bayes}          & $491\pm 6.88$            & $4.52\pm 0.008$         & $80.54\pm 11.8$          & $6.38\pm 0.12$          \\
\textbf{CRU}                    & $450.4\pm 8.27$          & $4.49\pm 0.012$         & $76.4\pm 12.8$           & $3.87\pm 0.2$           \\
\textbf{NESDE -- no hypernet}                   & $529.7\pm 13.34$          & $5.42\pm 0.067$         & $87.32\pm 11.57$           & $3.73\pm 0.13$           \\
\textbf{NESDE (ours)}           & $\mathbf{411.2\pm 7.39}$ & $\mathbf{4.43\pm 0.01}$ & $\mathbf{70.71\pm 12.3}$ & $\mathbf{3.69\pm 0.13}$ \\ \hline
\end{tabular}
\vspace{-0.3cm}
\end{table*}

\FloatBarrier


\subsection{Comparison to ODE-based Methods}
\label{sec:gru_experiments}

\cref{sec:synthetic_experiments} compares NESDE to GRU-ODE-Bayes~\citep{gru_ode} -- a recent ODE-based method that can provide an uncertainty estimation (which is a typical requirement in medical applications). Similarly to other recent ODE-based methods~\citep{chen2018neural}, GRU-ODE-Bayes relies on a non-linear neural network model for the differential equation. GRU-ODE-Bayes presents relatively poor prediction accuracy in \cref{sec:synthetic_experiments}, which may be partially attributed to the benchmark settings. First, the benchmark required GRU-ODE-Bayes to handle a control signal. As proposed in \citet{gru_ode}, we incorporated the control as part of the observation space. However, such a control-observation mix raises time synchrony issues (e.g., most training input samples include only control signal without observation) and even affect the training supervision (since the new control dimension in the state space affects the loss). Second, as discussed above, the piecewise linear dynamics of NESDE provide higher sample efficiency in face of the 1000 training trajectories in \cref{sec:synthetic_experiments}.

\begin{figure}[h]
    \centering
    \includegraphics[width=.38\linewidth]{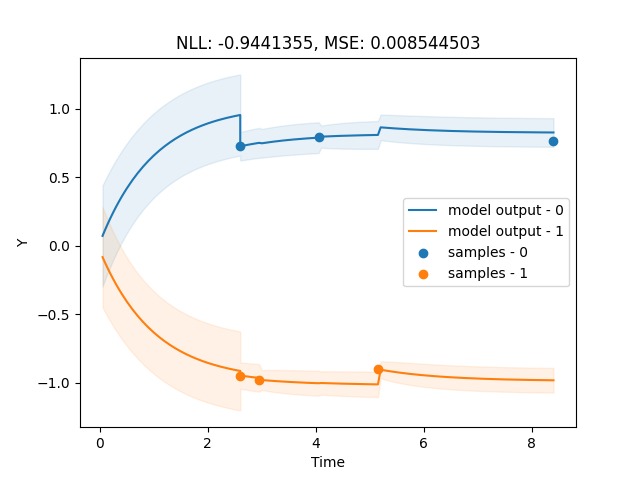}
    \caption{\small A sample test trajectory of the sparsely-observable OU process. The observations and the NESDE predictions (based on training over 400 trajectories) are presented separately for each of the two dimensions of the process.}
    \label{fig:ou_sample}
\end{figure}

In this section, we explicitly study the sample efficiency of NESDE vs.~GRU-ODE-Bayes in a problem with no control signal. Specifically, we generate data from the \href{https://github.com/edebrouwer/gru_ode_bayes}{\underline{GitHub repository}} of \citet{gru_ode}. The data consists of irregular samples of the two-dimensional Ornstein-Uhlenbeck process, which follows the SDE
$$
dx_t =\theta(\mu - x_t)dt + \sigma dWt ,
$$
where the noise follows a Wiener process, which is set in this experiment to have the covariance matrix
$$
Cov = \begin{pmatrix} 1 & 0.5 \\ 0.5 & 1
\end{pmatrix} .
$$

The process is sparsely-observed: we use a sample rate of $0.6$ (approximately $6$ observations for 10 time units). Each sampled trajectory has a time support of 10 time units. The process has two dimensions, and each observation can include either of the dimensions or both of them. The dynamics of the process are linear and remain constant for all the trajectories; however, the stable ``center" of the dynamics of each trajectory (similarly to $\alpha$ in \cref{eq:NESDE_model}) is sampled from a uniform distribution, increasing the difficulty of the task and requiring to infer $\alpha$ in an online manner.

\cref{fig:ou_sample} presents a sample of trajectory observations along with the corresponding predictions of the NESDE model (trained over 400 trajectories). Similarly to \citet{gru_ode}, the models are tested over each trajectory by observing all the measurements from times $t\le4$, and then predicting the process at the times of the remaining observations until the end of the trajectory.

\begin{figure}[h]
    \centering
    \includegraphics[width=.6\linewidth]{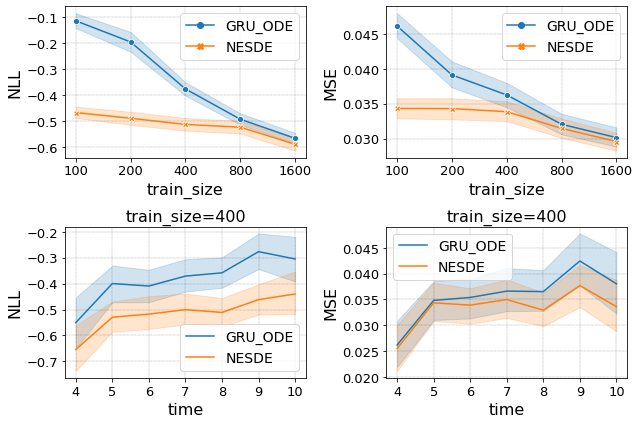}
    \caption{\small Top: losses of NESDE and GRU-ODE-Bayes over the OU benchmark, along with confidence intervals of 95\% over the test trajectories. NESDE demonstrates higher data efficiency, as its deterioration in small training datasets is moderate in comparison to GRU-ODE-Bayes. Bottom: errors vs. time, given 400 training trajectories, where all the test predictions rely on observations from times $t\le4$. The advantage of NESDE becomes larger as the prediction horizon is longer.}
    \label{fig:gru_comp}
\end{figure}

To test for data efficiency, we train both models over training datasets with different numbers of trajectories. As shown in \cref{fig:gru_comp}, the sparsely-observable setting with limited training data causes GRU-ODE-Bayes to falter, whereas NESDE learns robustly in this scenario. The advantage of NESDE over GRU-ODE-Bayes increases when learning from smaller datasets (\cref{fig:gru_comp}, top), or when predicting for longer horizons (\cref{fig:gru_comp}, bottom). This demonstrates the stability and data efficiency of the piecewise linear dynamics model of NESDE in comparison to non-linear ODE models.

\FloatBarrier


\subsection{Sparse Observations}
\label{sec:sparsity}
This experiment addresses the sparsity of each trajectory. We use the same benchmark as in \cref{sec:synthetic_experiments} and generate 4 train datasets, each one contains 400 trajectories, and a test set of 1000 trajectories. In each train-set, the trajectories have the same number of data samples, which varies between datasets (4,6,8,10). The test-set contains trajectories of varying number of observations, over the same support. For each train-set, we train all the models until convergence, and test them. \cref{fig:sparsity} presents the MSE over the test set, for both the complex and the real eigenvalues settings. It is noticeable that even with very sparse observations, NESDE achieves good performance. Here, GRU-ODE-Bayes appears to be more sample-efficient than CRU and LSTM, but it is less sample efficient than NESDE.

\begin{figure}[h]
\centering
\begin{subfigure}{0.32\linewidth}
  \centering
  \includegraphics[width=1\linewidth]{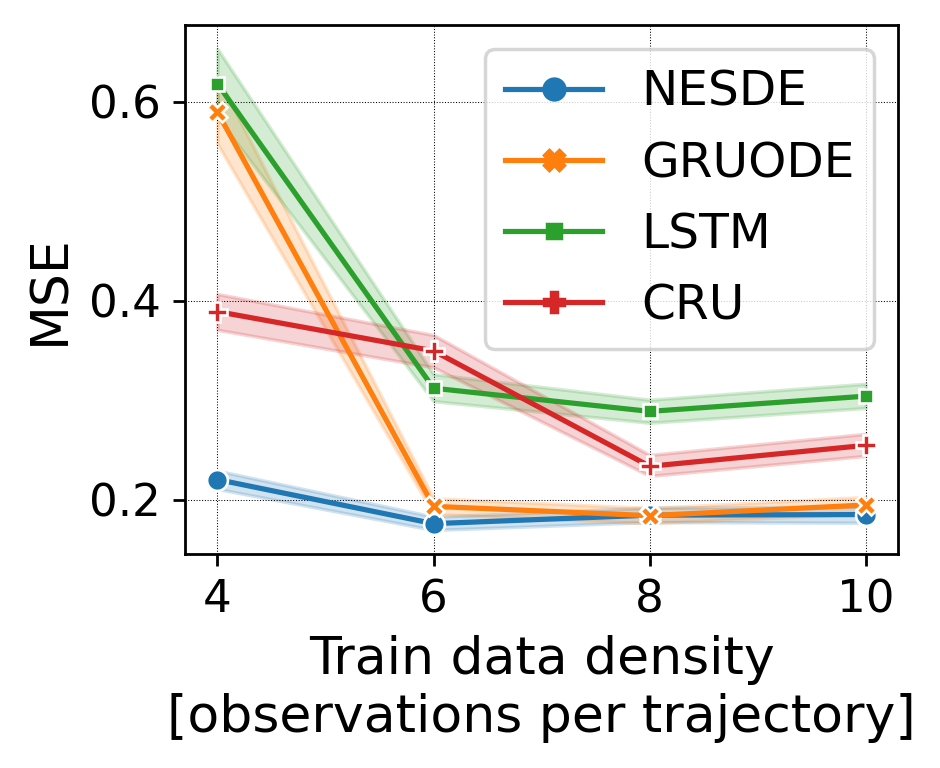}
  \caption{Complex dynamics}
  \label{fig:sparse_comp}
\end{subfigure}
\begin{subfigure}{0.32\linewidth}
  \centering
  \includegraphics[width=1\linewidth]{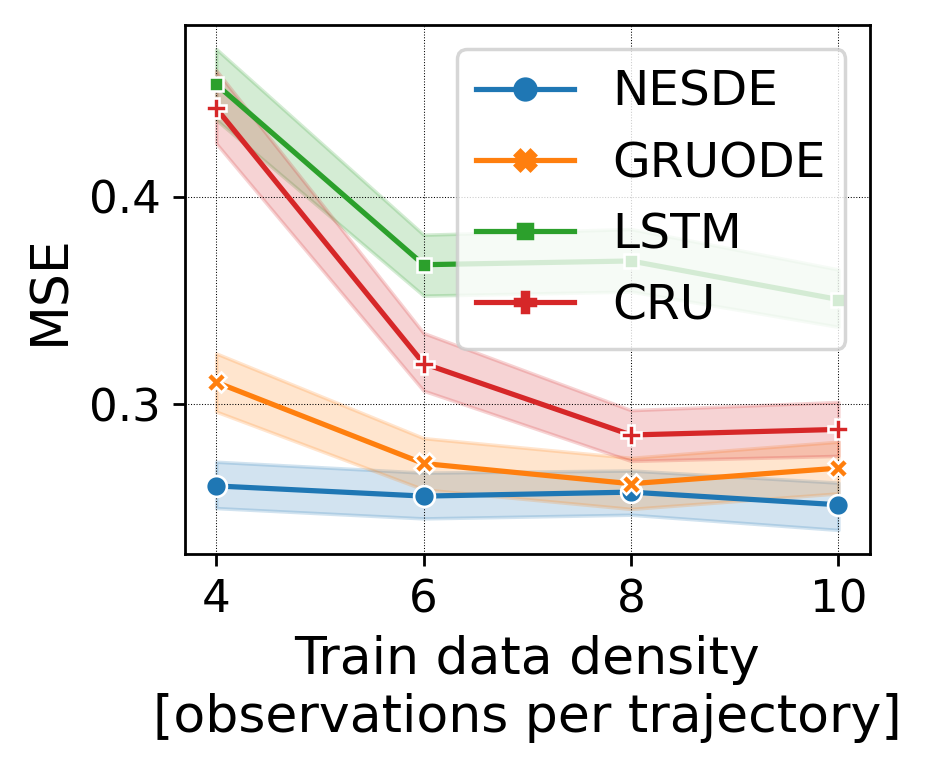}
  \caption{Real dynamics}
  \label{fig:sparse_real}
\end{subfigure}
\vspace{-0.3cm}
\caption{\footnotesize Test MSE vs.~train observations-per-trajectory. 95\% confidence intervals are calculated over 1000 test trajectories.}
\label{fig:sparsity}
\end{figure}

\FloatBarrier


\subsection{Synthetic Data Experiments with Regular Observations}
\label{app:regular_lstm}

While NESDE (and ODE-based models) can provide predictions at any point of time, a vanilla LSTM is limited to the predefined prediction horizon. Shorter horizons provide higher temporal resolution, but this comes with a cost: more recursive computations are needed per time interval, increasing both learning complexity and running time. For example, if medical measurements are available once per hour while predictions are required every 10 seconds, the model would have to run recursively 360 times between consecutive measurements, and would have to be trained accordingly in advance. We use the synthetic data environment from \cref{sec:synthetic_experiments}, in the \textit{complex} dynamics setting, and test both regularly and out-of-distribution control (see \cref{sec:synthetic_experiments}). Here, we use LSTM models trained with resolutions of 1, 8 and 50 predictions per observation. All the LSTM models receive the control $u$ and the current observation $Y$ as an input, along with a boolean $b_o$ specifying observability: in absence of observation, we set $Y=0$ and $b_o=0$. The models consist of a linear layer on top of an LSTM layer, with 32 neurons between the two. To compare LSTMs with various resolutions, we work with regular samples, $10$ samples, one at each second. The control changes in a $10^{-2}$ seconds' resolution, and contains information about the true state.

In \cref{fig:res_LSTM_sample} we present a sample trajectory (without the control signal) with the predictions of the various LSTMs and NESDE. It can be observed that while NESDE provides continuous, smooth predictions, the resolution of the LSTMs must be adapted for a good performance. As shown in \cref{fig:res_LSTM_uni_reg}, all the methods perform well from time $t=3$ and on, still, NESDE and the low-resolution variants of LSTM attain the best results. The poor accuracy of the high-resolution LSTM demonstrates the accuracy-vs-resolution tradeoff in recursive models, moreover, GRUODE shows similar behavior in this analysis, which may hint on the recursive components within GRUODE.

\begin{figure*}[ht]
\centering
\begin{subfigure}{.29\linewidth}
  \centering
  \includegraphics[width=1\linewidth]{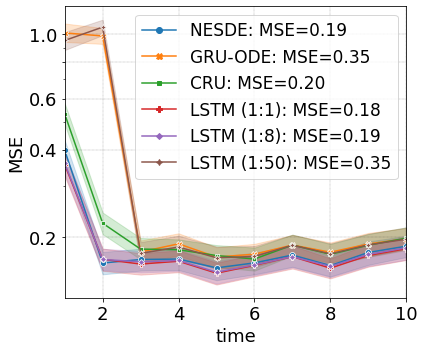}
  \caption{One-step prediction}
  \label{fig:res_LSTM_uni_reg}
\end{subfigure}
\begin{subfigure}{.29\linewidth}
  \centering
  \includegraphics[width=1\linewidth]{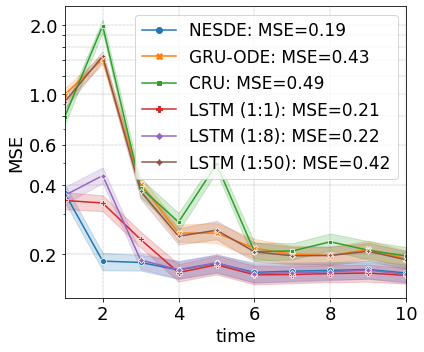}
  \caption{Out of distribution control}
  \label{fig:res_LSTM_ood_reg}
\end{subfigure}
\begin{subfigure}{.40\linewidth}
  \centering
  \includegraphics[width=1\linewidth]{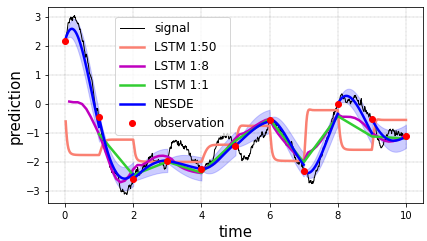}
  \caption{Sample trajectory}
  \label{fig:res_LSTM_sample}
\end{subfigure}
\caption{\footnotesize MSE for predictions, relying on the whole history of the trajectory for (a) the test set, and (b) out-of-distribution test set. The uncertainty corresponds to 0.95-confidence-intervals over 1000 trajectories. (c) Sample trajectory and predictions. The LSTM predictions are limited to predefined times (e.g., LSTM 1:1 only predicts at observation times), but their predictions are connected by lines for visibility. The shading corresponds to NESDE uncertainty (note that the LSTM does not provide uncertainty estimation).}
\end{figure*}

The out-of-distribution test results (\cref{fig:res_LSTM_ood_reg}) show that a change in the control policy could result with major errors; while NESDE achieves errors which are close to \cref{fig:res_LSTM_uni_reg}, the other methods deteriorate in their performance. Notice the scale difference between the figures. The high-resolution LSTM and the ODE-based methods suffer the most, and the low-resolution variants of the LSTM, demonstrate robustness to the control change. This result is similar to the results we present in \cref{sec:synthetic_experiments}, although here we see similarities between the variants of the LSTM and the ODE-based methods.

\FloatBarrier


\subsection{Interpretability: Inspecting the Spectrum}
\label{app:interpretability}
In addition to explicit predictions at flexible times, NESDE provides direct estimation of the process dynamics, carrying significant information about the essence of the process.

\begin{figure}[t]
    \centering
    \begin{subfigure}{.32\linewidth}
      \centering
      \includegraphics[width=1\linewidth]{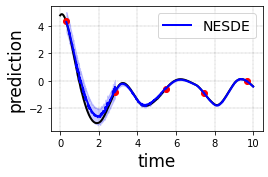}
      \caption{Complex $\lambda$}
      \label{fig:res_lambdas_copmlex_sample}
    \end{subfigure}
    \begin{subfigure}{.32\linewidth}
      \centering
      \includegraphics[width=1\linewidth]{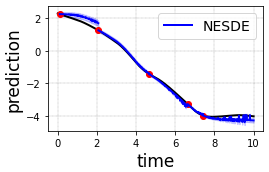}
      \caption{Real $\lambda$}
      \label{fig:res_lambdas_real_sample}
    \end{subfigure}
    \begin{subfigure}{.32\linewidth}
      \centering
      \includegraphics[width=1\linewidth]{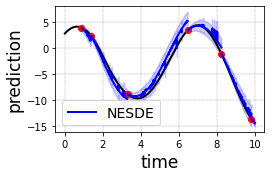}
      \caption{Imaginary $\lambda$}
      \label{fig:res_lambdas_imag_sample}
    \end{subfigure}
    \vspace{-0.3cm}
    \caption{\footnotesize Sample trajectories with different types of dynamics (the control signal is not shown). In addition to the predictions, NESDE directly estimates the dynamics defined by $\lambda$.}
    \label{fig:res_lambdas_sample}
\end{figure}

For example, consider the following 3 processes, each with one observable variable and one latent variable: $A_1=\begin{psmallmatrix}-0.5&-2\\2&-1\end{psmallmatrix}$ with the corresponding eigenvalues $\lambda_1\approx -0.75 \pm 1.98 i$; $A_2=\begin{psmallmatrix}-0.5&-0.5\\-0.5&-1\end{psmallmatrix}$ with $\lambda_2\approx(-1.3,-0.19)^\top$; and $A_3=\begin{psmallmatrix}1&-2\\2&-1\end{psmallmatrix}$ with $\lambda_3\approx\pm 1.71i$. As demonstrated in \cref{fig:res_lambdas_sample}, the three processes have substantially different dynamics: roughly speaking, real negative eigenvalues correspond to decay, whereas imaginary eigenvalues correspond to periodicity.

For each process, we train NESDE over a dataset of 200 trajectories with 5-20 observations each.
We set NESDE to assume an underlying dimension of $n=2$ (i.e., one latent dimension in addition to the $m=1$ observable variable); train it once in real mode (real eigenvalues) and once in complex mode (conjugate pairs of complex eigenvalues); and choose the model with the better NLL over the validation data. Note that instead of training twice, the required expressiveness could be obtained using $n=4$ in complex mode (see \cref{sec:complex_eigens}); however, in this section we keep $n=2$ for the sake of spectrum interpretability.

As the processes have linear dynamics, for each of them NESDE learned to predict a consistent dynamics model: all estimated eigenvalues are similar over different trajectories, with standard deviations smaller than 0.1. The learned eigenvalues for the three processes are $\tilde{\lambda}_1= -0.77 \pm 1.98 i$; $\tilde{\lambda}_2=(-0.7,-0.19)^\top$; and $\tilde{\lambda}_3=-0.03\pm 0.83i$. That is, NESDE recovers the eigenvalues class (complex, real, or imaginary), which captures the essence of the dynamics -- even though it only observes one of the two dimensions of the process. The eigenvalues are not always recovered with high accuracy, possibly due to the latent dimensions making the dynamics formulation ambiguous.

\FloatBarrier


\subsection{Model Expressiveness and Overfitting}
\label{sec:oracle_ood}

It is well known that more complex models are capable to find complex connections within the data, but are also more likely to overfit the data. It is quite common that a data that involves control is biased or affected by confounding factors: a pilot may change his course of flight because he saw a storm that was off-the-radar; a physician could adapt his treatment according to some measure that is off-charts. Usually, using enough validation data could solve the overfitting issue, although sometimes the same confounding effects show in the validation data, which results in a model that is overfitted to the dataset. When targeting a model for control adjustment, it is important that it would be robust to changes in the control; a model that performs poorly when facing different control is unusable for control tuning. To exemplify an extreme case of confounding factors in the context of control, we add a correlation between the control (observed at all times) to the predictable measure (observed sparsely), in particular at times that the predictable is unobserved. We harness the same synthetic data benchmark as in \cref{sec:synthetic_experiments}, and use regular time samples, and the same LSTM baselines as in \cref{app:regular_lstm} but here we generate different two types of control signals:
\begin{enumerate}
    \item Same Distribution (SD): at each time $t$, the control $u(t)=b_t - 0.8\cdot Y_t$.
    \item Out of Distribution(OOD): at each time $t$, the control $u(t)=b_t + 0.8\cdot Y_t$.
\end{enumerate}
$b_t$ is a random piecewise constant and $Y_t$ is the exact value of the measure we wish to predict. The first type is used to generate the train and the test sets, additionally we generate an out-of-distribution test-set using the second type. We observe in \cref{fig:oracle} that GRU-ODE-Bayes and the high-resolution LSTM achieve very low MSE over the SD as seen during training. CRU also achieves very low MSE, although not as much. The results over the OOD data show that the high performance over SD came with a cost -- the better a model is over SD the worse it is over OOD. The results of LSTM 1:1 are not surprising, it sees the control signal only at observation-times, so it cannot exploit the hidden information within the control signal. NESDE does not ignore such information, while maintaining the robustness w.r.t.~control.

\vspace{-0.1cm}
\begin{figure*}[h]
\centering
\begin{subfigure}{.284\linewidth}
  \centering
  \includegraphics[width=1\linewidth]{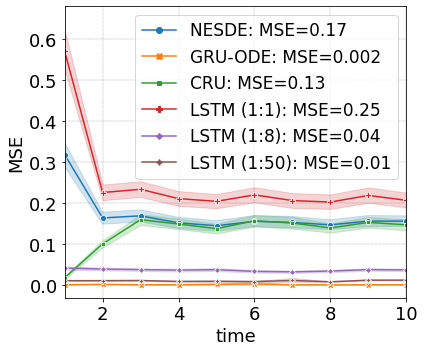}
  \caption{Same control distribution}
  \label{fig:res_oracle}
\end{subfigure}
\begin{subfigure}{.284\linewidth}
  \centering
  \includegraphics[width=1\linewidth]{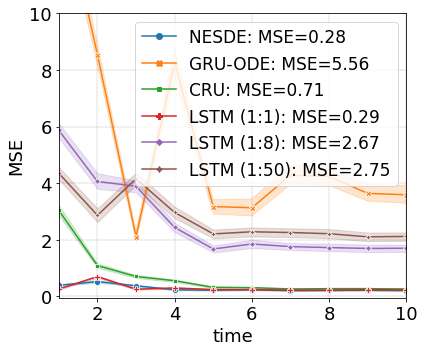}
  \caption{Out of distribution control}
  \label{fig:res_oracle_ood}
\end{subfigure}
\vspace{-0.3cm}
\caption{\footnotesize MSE for predictions under regular time samples, where the control signal is correlated to the measure we wish to predict, even in times when it is unobserved. (a) Shows the results for a test set that has the same correlation between the control and the predictable measure as in the train set. (b) present the MSE for a different test set, with different correlation. Notice the different scales of the graphs.}
\label{fig:oracle}
\end{figure*}


\section{Medication Dosing Prediction: Implementation Details}
\label{sec:medical_detailed}
Below, we elaborate on the implementation details of \cref{sec:real_experiments}.

\subsection{Data preprocessing}
\label{sec:preprocessing_medical}

\textbf{Heparin:}
We derive our data from the MIMIC-IV dataset \citep{Johnson_2021}, available under the \href{https://physionet.org/content/mimiciv/view-license/0.4/}{\underline{PhysioNet Credentialed Health Data License}}. For the UH dosing dataset, we extract the patients that were given UH during their intensive care unit (ICU) stay. We exclude patients that were treated with discrete (not continuous) doses of UH, or with other anticoagulants; or that were tested for aPTT less than two times. The control signal (UH dosing rate) is normalized by the patient weight. Each trajectory of measurements is set to begin one hour before the first UH dose, and is split in the case of 48 hours without UH admission. This process resulted with $5866$ trajectories, containing a continuous UH signal, an irregularly-observed aPTT signal, and discretized context features. Note that we do not normalize the aPTT values.

\textbf{Vancomycin:}
The VM dosing dataset derived similarly, from patients who received VM during their ICU stay, where we consider only patients with at least $2$ VM concentration measurements. Each trajectory begins at the patient's admission time, and we also split in the case of 48 hours without VM dosage. Additionally, we add an artificial observation of $0$ at time $t=0$, as the VM concentration is $0$ before any dose was given (we do not use these observations when computing the error).

\textbf{General implementation details:}
For each train trajectory, we only sample some of the observations, to enforce longer and different prediction horizons, which was found to aid the training robustness. Hyperparameters (e.g., learning rate) were chosen by trial-and-error with respect to the validation-set (separately for each model).

Context variables $C$ are used in both domains. We extract $42$ features, some measured continuously (e.g., heart rate, blood pressure), some discrete (e.g., lab tests, weight) and some static (e.g., age, background diagnoses). Each feature is averaged (after removing outliers) over a fixed time-interval of four hours, and then normalized.

\subsection{LSTM Baseline Implementation}
\label{sec:lstm_medical}
The LSTM module we use as a baseline has been tailored specifically to the setting:
\begin{enumerate}
    \item It includes an embedding unit for the context, which is updated whenever a context is observed, and an embedded context is stored for future use.
    \item The inputs for the module include the embedded context, the previous observations, the control signal and the time difference between the current time and the next prediction time.
    \item Where the control signal is piecewise constant: any time it changes we produce predictions (even though no sample is observed) that are then used as an input for the model, to model the effect of the UH more accurately.
\end{enumerate}
We train it with the same methodology we use for NESDE where the training hyperparameters chosen by the best performance over the validation data.

\textbf{Architecture for the medication dosing benchmarks}:
The model contains two fully connected elements: one for the context, with two hidden layers of size $32$ and $16$-dimensional output which is fed into a $Tanh$ activation; the second one uses the LSTM output to produce a one-dimensional output, which is fed into a ReLU activation to produce positive outputs, its size determined by the LSTM dimensions. The LSTM itself has an input of $19$ dimensions; $16+1+1+1$ for the context, control, previous observations and the time interval to predict. It has a hidden size of $64$ and two recurrent layers, with dropout of $0.2$. All the interconnections between the linear layers include ReLU activations.

\textbf{Architecture for the synthetic data benchmarks}:
Here, there is no context, then the model contains one fully connected element that receives the LSTM output and has two linear layers of sizes $32$ and $1$ with a Tanh activation between them. The LSTM has an input of $3$ dimensions; for the state, control signal, and the time interval to predict. It has a hidden size of $32$ and two recurrent layers, with dropout of $0.2$. 


\subsection{Extended Results}

The figures below present more detailed information for the experiments discussed in \cref{sec:real_experiments}.
All experiments were run on a single Ubuntu machine with eight i9-10900X CPU cores and Nvidia's RTX A5000 GPU. NESDE required several hours to train per benchmark.

\begin{figure}[h]
    \centering
    \includegraphics[width=.38\linewidth]{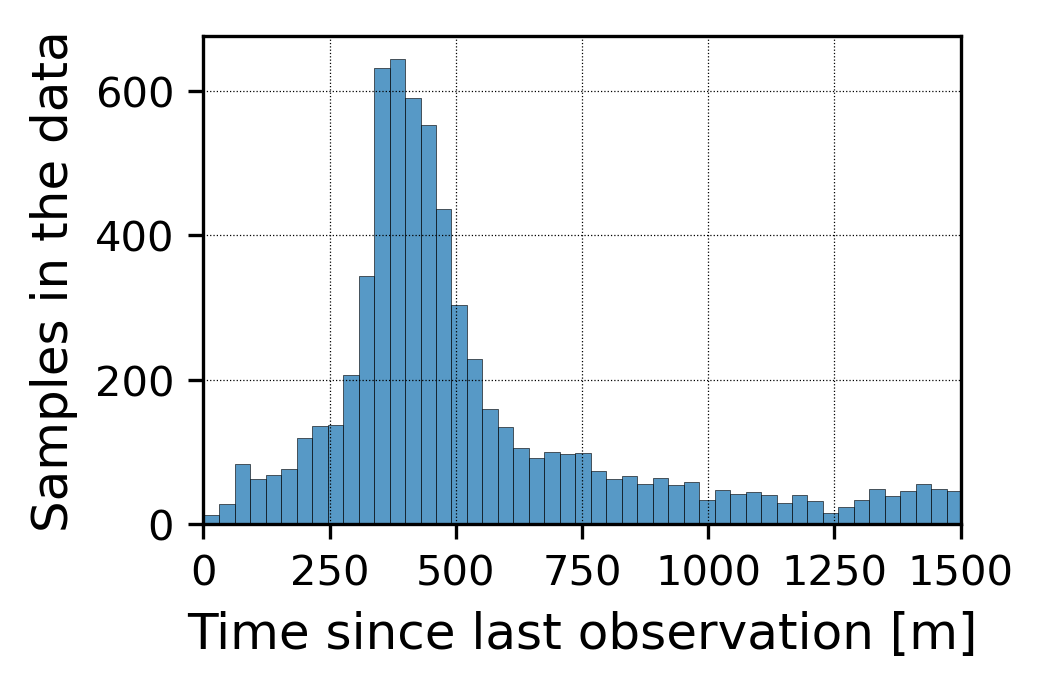}
    \caption{\small Histogram of prediction horizons in the UH dosing data (\cref{sec:real_experiments}). Notice that the peak of the histogram around 6 hours (360 minutes) corresponds to the accuracy peak of the LSTM and GRU-ODE-Bayes in \cref{fig:res_UH}.}
    \label{fig:horizon_hist}
\end{figure}


\end{document}